\newcommand{\tabincell}[2]{\begin{tabular}{@{}#1@{}}#2\end{tabular}}
\DeclareMathOperator*{\argmax}{arg\,max}
\newtheorem{theorem}{Theorem}
\newtheorem{assumption}{Assumption}
\title{Learning to Utilize Shaping Rewards: \\ A New Approach of Reward Shaping}
\author{%
  Yujing Hu\textsuperscript{\textnormal{1}}, Weixun Wang\textsuperscript{\textnormal{1,2}}, Hangtian Jia\textsuperscript{\textnormal{1}}, Yixiang Wang\textsuperscript{\textnormal{1,3}}, Yingfeng Chen\textsuperscript{\textnormal{1}}\thanks{Corresponding Author}, \\ \textbf{Jianye Hao}\textsuperscript{2,4}, \textbf{Feng Wu}\textsuperscript{3}, \textbf{Changjie Fan}\textsuperscript{1} \\
  \textsuperscript{1}Netease Fuxi AI Lab, Netease, Inc., Hangzhou, China\\
  \textsuperscript{2}College of Intelligence and Computing, Tianjin University, Tianjin, China\\
  \textsuperscript{3}School of Computer Science and Technology, University of Science and Technology of China\\
  \textsuperscript{4}Noah's Ark Lab, Huawei, China\\
  \texttt{huyujing@corp.netease.com, wxwang@tju.edu.cn, jiahangtian@corp.netease.com} \\
  \texttt{yixiangw@mail.ustc.edu.cn, chenyingfeng1@corp.netease.com} \\
  \texttt{jianye.hao@tju.edu.cn, wufeng02@ustc.edu.cn, fanchangjie@corp.netease.com} \\
}
\begin{document}

\maketitle

\begin{abstract}
Reward shaping is an effective technique for incorporating domain knowledge into reinforcement learning (RL). Existing approaches such as potential-based reward shaping normally make full use of a given shaping reward function. However, since the transformation of human knowledge into numeric reward values is often imperfect due to reasons such as human cognitive bias, completely utilizing the shaping reward function may fail to improve the performance of RL algorithms. In this paper, we consider the problem of adaptively utilizing a given shaping reward function. We formulate the utilization of shaping rewards as a bi-level optimization problem, where the lower level is to optimize policy using the shaping rewards and the upper level is to optimize a parameterized shaping weight function for true reward maximization. We formally derive the gradient of the expected true reward with respect to the shaping weight function parameters and accordingly propose three learning algorithms based on different assumptions. Experiments in sparse-reward \textit{cartpole} and \textit{MuJoCo} environments show that our algorithms can fully exploit beneficial shaping rewards, and meanwhile ignore unbeneficial shaping rewards or even transform them into beneficial ones.
\end{abstract}

\section{Introduction} \label{Sec:Intro}
A common way for addressing the sample efficiency issue of RL is to transform possible domain knowledge into additional rewards and guide learning algorithms to learn faster and better with the combination of the original and new rewards, which is known as reward shaping (RS). Early work of reward shaping can be dated back to the attempt of using hand-crafted reward function for robot behavior learning \cite{dorigo1994robot} and bicycle driving \cite{randlov1998learning}. The most well-known work in the reward shaping domain is the potential-based reward shaping (PBRS) method \cite{ng1999policy}, which is the first to show that policy invariance can be guaranteed if the shaping reward function is in the form of the difference of potential values. Recently, reward shaping has also been successfully applied in more complex problems such as Doom \cite{lample2017playing,song2019playing} and Dota 2 \cite{OpenAI_dota}.

Existing reward shaping approaches such as PBRS and its variants \cite{devlin2012dynamic,grzes2008learning,harutyunyan2015expressing,wiewiora2003principled} mainly focus on the way of generating shaping rewards (e.g., using potential values) and normally assume that the shaping rewards transformed from prior knowledge are completely helpful. However, such an assumption is not practical since the transformation of human knowledge (e.g., rules) into numeric values (e.g., rewards or potentials) inevitably involves human operations, which are often subjective and may introduce human cognitive bias. For example, for training a Doom agent, designers should set appropriate rewards for key events such as object pickup, shooting, losing health, and losing ammo \cite{lample2017playing}. However, there are actually no instructions indicating what specific reward values are appropriate and the designers most probably have to try many versions of reward functions before getting a well-performed agent. Furthermore, the prior knowledge provided may also be unreliable if the reward designers are not experts of Doom.

Instead of studying how to generate useful shaping rewards, in this paper, we consider how to adaptively utilize a given shaping reward function. The term \textit{adaptively utilize} stands for utilizing the beneficial part of the given shaping reward function as much as possible and meanwhile ignoring the unbeneficial shaping rewards. The main contributions of this paper are as follows. Firstly, we define the utilization of a shaping reward function as a bi-level optimization problem, where the lower level is to optimize policy for shaping rewards maximization and the upper level is to optimize a parameterized shaping weight function for maximizing the expected accumulative true reward. Secondly, we provide formal results for computing the gradient of the expected true reward with respect to the weight function parameters and propose three learning algorithms for solving the bi-level optimization problem. Lastly, extensive experiments are conducted in \textit{cart-pole} and \textit{MuJoCo} environments. The results show that our algorithms can identify the quality of given shaping rewards and adaptively make use of them. In some tests, our algorithms even transform harmful shaping rewards into beneficial ones and help to optimize the policy better and faster.

\section{Background} \label{Sec:Background} 
In this paper, we consider the policy gradient framework \cite{sutton2000policy} of reinforcement learning (RL) and adopt Markov decision process (MDP) as the mathematical model. Formally, an MDP is a tuple $\mathcal{M} = \langle \mathcal{S}, \mathcal{A}, P, r, p_0, \gamma \rangle $, where $\mathcal{S}$ is the state space, $\mathcal{A}$ is the action space, $P : \mathcal{S} \times \mathcal{A} \times \mathcal{S} \rightarrow [0, 1]$ is the state transition function, $r : \mathcal{S} \times \mathcal{A} \rightarrow \mathbb{R}$ is the (expected) reward function, $p_0: \mathcal{S} \rightarrow [0, 1]$ is the probability distribution of initial states, and $\gamma \in [0, 1)$ is the discount rate. Generally, the policy of an agent in an MDP is a mapping $\pi : \mathcal{S} \times \mathcal{A} \rightarrow [0, 1]$ and can be represented by a parameterized function (e.g., a neural network). In this paper, we denote a policy by $\pi_{\theta}$, where $\theta$ is the parameter of the policy function. Let $p(s' \rightarrow s, t, \pi_{\theta})$ be the probability that state $s$ is visited after $t$ steps from state $s'$ under the policy $\pi_{\theta}$. We define the discounted state distribution $\rho^{\pi}(s) = \int_{\mathcal{S}} \sum_{t=1}^{\infty} \gamma^{t-1} p_0(s') p(s' \rightarrow s, t, \pi_{\theta}) \text{d} s' $, which is also called the discounted weighting of states when following.
The goal of the agent is to optimize the parameter $\theta$ for maximizing the expected accumulative rewards $J(\pi_{\theta}) = \mathbb{E}_{s \sim \rho^{\pi}, a \sim \pi_{\theta}} \big[ r(s,a) \big]$. According to the policy gradient theorem \cite{sutton2000policy}, the gradient of $J(\pi_{\theta})$ with respect to $\theta$ is $\nabla_{\theta} J(\pi_{\theta}) = \mathbb{E}_{s \sim \rho^{\pi}, a \sim \pi_{\theta}} \big[ \nabla_{\theta} \log \pi_{\theta}(s,a) Q^{\pi}(s,a) \big]$, where $Q^{\pi}$ is the state-action value function.

\subsection{Reward Shaping} \label{SubSec:RewardShaping}
Reward shaping refers to modifying the original reward function with a shaping reward function which incorporates domain knowledge. We consider the most general form, namely the additive form, of reward shaping. Formally, this can be defined as $r' = r + F$, where $r$ is the original reward function, $F$ is the shaping reward function, and $r'$ is the modified reward function. Early work of reward shaping \cite{dorigo1994robot,randlov1998learning} focuses on designing the shaping reward function $F$, but ignores that the shaping rewards may change the optimal policy. Potential-based reward shaping (PBRS) \cite{ng1999policy} is the first approach which guarantees the so-called policy invariance property. Specifically, PBRS defines $F$ as the difference of potential values: $F(s,a,s') = \gamma \Phi(s') - \Phi(s)$, where $\Phi: \mathcal{S} \rightarrow \mathbb{R}$ is a potential function which gives some kind of hints on states. Important variants of PBRS include the potential-based advice (PBA) approach: $F(s,a,s',a') = \gamma \Phi(s',a') - \Phi(s,a)$, which defines $\Phi$ over the state-action space for providing advice on actions \cite{wiewiora2003principled}, the dynamic PBRS approach: $F(s,t,s',t') = \gamma \Phi(s',t') - \Phi(s,t)$, which introduces a time parameter into $\Phi$ for allowing dynamic potentials \cite{devlin2012dynamic}, and the dynamic potential-based advice (DPBA) approach which learns an auxiliary value function for transforming any given rewards into potentials \cite{harutyunyan2015expressing}.

\subsection{Related Work} \label{SubSec:RelatedWork}
Besides the shaping approaches mentioned above, other important works of reward shaping include the theoretical analysis of PBRS \cite{Wiewiora03JAIR,LaudD03}, the automatic shaping approaches \cite{Marthi07,grzes2008learning}, multi-agent reward shaping \cite{DevlinK11,SunCWL18}, and some novel approaches such as belief reward shaping \cite{MaromR18}, ethics shaping \cite{WuL18}, and reward shaping via meta learning \cite{HaoshengZouArXiv-1901-09330}. Similar to our work, the automatic successive reinforcement learning (ASR) framework \cite{FuZLL19} learns to take advantage of multiple auxiliary shaping reward functions by optimizing the weight vector of the reward functions. However, ASR assumes that all shaping reward functions are helpful and requires the weight sum of these functions to be one. In contrast, our shaping approaches do not make such assumptions and the weights of the shaping rewards are state-wise (or state-action pair-wise) rather than function-wise. Our work is also similar to the optimal reward framework \cite{SinghBC04,SorgSL10,ZhengOS18} which maximizes (extrinsic) reward by learning an intrinsic reward function and simultaneously optimizing policy using the learnt reward function. Recently, Zheng \textit{et al.} \cite{zheng2019can} extend this framework to learn intrinsic reward function which can maximize lifetime returns and show that it is feasible to capture knowledge about long-term exploration and exploitation into a reward function. The most similar work to our paper may be the population-based method \cite{jaderberg2019human} which adopts a two-tier optimization process to learn the intrinsic reward signals of important game points and achieves human-level performance in the capture-the-flag game mode of Quake III. Learning an intrinsic reward function has proven to be an effective way for improving the performance of RL \cite{PathakAED17} and can be treated as a special type of online reward shaping. Instead of investigating how to learn helpful shaping rewards, our work studies a different problem where a shaping reward function is available, but how to utilize the function should be learnt.

\section{Parameterized Reward Shaping} \label{Sec:PARS}
Given an MDP $\mathcal{M} = \langle \mathcal{S}, \mathcal{A}, P, r, p_0, \gamma \rangle $ and a shaping reward function $f$, our goal is to distinguish between the beneficial and unbeneficial rewards provided by $f$ and utilize them differently when optimizing policy in $\mathcal{M}$. By introducing a \textit{shaping weight function} into the additive form of reward shaping, the utilization problem of shaping rewards can be modeled as 
\begin{equation} \label{Eq:ParamRwdShp}
\tilde{r}(s,a) = r(s,a) + z_{\phi}(s,a) f(s,a),
\end{equation}
where $z_{\phi}: \mathcal{S} \times \mathcal{A} \rightarrow \mathbb{R}$ is the shaping weight function which assigns a weight to each state-action pair and is parameterized by $\phi$. In our setting, the shaping reward function is represented by $f$ rather than $F$ for keeping consistency with the other lower-case notations of rewards. We call this new form of reward shaping \textit{parameterized reward shaping} as $z_{\phi}$ is a parameterized function. For the problems with multiple shaping reward functions, ${z}_{\phi}(s,a)$ is a weight vector where each element corresponds to one shaping reward function.

\subsection{Bi-level Optimization} \label{SubSec:BiOpt}
Let $\pi_{\theta}$ denote an agent's (stochastic) policy with parameter $\theta$. The learning objective of parameterized reward shaping is twofold. Firstly, the policy $\pi_{\theta}$ should be optimized according to the modified reward function $\tilde{r}$. Given the shaping reward function $f$ and the shaping weight function $z_{\phi}$, this can be defined as $\tilde{J}(\pi_{\theta}) = \mathbb{E}_{s \sim \rho^{\pi}, a \sim \pi_{\theta}} \big[ r(s,a) + z_{\phi}(s,a) f(s,a) \big]$. Secondly, the shaping weight function $z_{\phi}$ needs to be optimized so that the policy $\pi_{\theta}$, which maximizes $\tilde{J}(\pi_{\theta})$, can also maximize the expected accumulative true reward $J(z_{\phi}) = \mathbb{E}_{s \sim \rho^{\pi}, a \sim \pi_{\theta}} \big[ r(s,a) \big]$. The idea behind $J(z_{\phi})$ is that although $z_{\phi}$ cannot act as a policy, it can still be assessed by evaluating the true reward performance of its direct outcome (i.e., the policy $\pi_{\theta}$). Thus, the optimization of the policy $\pi_{\theta}$ and the shaping weight function $z_{\phi}$ forms a bi-level optimization problem, which can be formally defined as
\begin{equation} \label{Eq:BLO_STO}
\begin{split}
&\max_{\phi} \text{ } \mathbb{E}_{s \sim \rho^{\pi}, a \sim \pi_{\theta}} \big[ r(s,a) \big] \\
&\quad\text{s.t. } \phi \in \Phi \\
&\qquad \theta = \argmax_{\theta^{\prime}} \text{ } \mathbb{E}_{s \sim \rho^{\pi}, a \sim \pi_{\theta^{\prime}}} \big[ r(s,a) + z_{\phi}(s,a) f(s,a) \big] \\
&\qquad\quad \text{ s.t. } \theta^{\prime} \in \Theta,
\end{split}
\end{equation}
where $\Phi$ and $\Theta$ denote the parameter spaces of the shaping weight function and the policy, respectively. We call this problem \textit{bi-level optimization of parameterized reward shaping} (BiPaRS, pronounced ``bypass''). In this paper, we use notations like $x$ and $\tilde{x}$ to denote the variables with respect to the original and modified MDPs, respectively. For example, we use $r$ to denote the true reward function and $\tilde{r}$ to denote the modified reward function.

\subsection{Gradient Computation} \label{SubSec:GradComputation}
The solution of the BiPaRS problem can be computed using a simple alternating optimization method. That is, to fix $z_{\phi}$ or $\pi_{\theta}$ and optimize the other alternately. Given the shaping weight function $z_{\phi}$, the lower level of BiPaRS is a standard policy optimization problem. Thus, the gradient of the expected accumulative modified reward $\tilde{J}$ with respect to the policy parameter $\theta$ is   
\begin{equation} \label{Eq:LowLvlGradSto}
\nabla_{\theta} \tilde{J}(\pi_{\theta}) = \mathbb{E}_{s \sim \rho^{\pi}, a \sim \pi_{\theta}} \big[ \nabla_{\theta} \log \pi_{\theta}(s,a) \tilde{Q}(s,a) \big].
\end{equation}
Here, $\tilde{Q}$ is the state-action value function of the current policy in the modified MDP $\tilde{\mathcal{M}} = \langle \mathcal{S}, \mathcal{A}, P, \tilde{r}, p_0, \gamma \rangle$. For the upper-level optimization, the following theorem is the basis for computing the gradient of $J(z_{\phi})$ with respect to the variable $\phi$.
\begin{theorem} \label{Theorem:UppLvlGrad}
Given the shaping weight function $z_{\phi}$ and the stochastic policy $\pi_{\theta}$ of the agent in the upper level of the BiPaRS problem (Equation (\ref{Eq:BLO_STO})), the gradient of the objective function $J(z_{\phi})$ with respect to the variable $\phi$ is 
\begin{equation} \label{Eq:UppLvlGradSto}
\nabla_{\phi} J(z_{\phi}) = \mathbb{E}_{s \sim \rho^{\pi}, a \sim \pi_{\theta}} \big[ \nabla_{\phi} \log \pi_{\theta}(s,a) Q^{\pi}(s,a) \big],
\end{equation}
where $Q^{\pi}$ is the state-action value function of $\pi_{\theta}$ in the original MDP.
\end{theorem}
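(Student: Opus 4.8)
The plan is to recognize that the upper-level objective $J(z_{\phi})$ is nothing but the ordinary reinforcement-learning objective $J(\pi_{\theta}) = \mathbb{E}_{s \sim \rho^{\pi}, a \sim \pi_{\theta}}[r(s,a)]$ evaluated at the policy $\pi_{\theta}$ that solves the lower level, and that through this lower-level solution $\theta$ is an (implicit) function of $\phi$. Hence I would treat $\pi_{\theta}$ as a policy parameterized by $\phi$ and simply repeat the argument of the policy gradient theorem \cite{sutton2000policy} with $\phi$ in the role of the policy parameter. The one structural fact that makes this legitimate is that the objective involves the \emph{true} reward $r$ and the \emph{true} transition kernel $P$, neither of which depends on $\phi$; the shaping weight $z_{\phi}$ and the shaping reward $f$ enter only through the lower-level optimization, hence only through $\theta(\phi)$.

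First I would write $J(z_{\phi}) = \int_{\mathcal{S}} p_0(s)\, V^{\pi}(s)\,\mathrm{d}s$, where $V^{\pi}$ and $Q^{\pi}$ are the state- and state-action value functions of $\pi_{\theta}$ in the \emph{original} MDP, so that $V^{\pi}(s) = \int_{\mathcal{A}} \pi_{\theta}(s,a)\, Q^{\pi}(s,a)\,\mathrm{d}a$ and $Q^{\pi}(s,a) = r(s,a) + \gamma \int_{\mathcal{S}} P(s'\mid s,a)\, V^{\pi}(s')\,\mathrm{d}s'$. Differentiating the first identity with respect to $\phi$ and substituting the Bellman equation for $\nabla_{\phi} Q^{\pi}$ --- in which the $r$ and $P$ terms drop out because they are $\phi$-independent --- yields the recursion
\begin{equation*}
\nabla_{\phi} V^{\pi}(s) = \int_{\mathcal{A}} \Big[ \nabla_{\phi} \pi_{\theta}(s,a)\, Q^{\pi}(s,a) + \gamma\, \pi_{\theta}(s,a) \int_{\mathcal{S}} P(s'\mid s,a)\, \nabla_{\phi} V^{\pi}(s')\,\mathrm{d}s' \Big]\,\mathrm{d}a .
\end{equation*}
Unrolling this recursion and using the definition of $p(s' \rightarrow s, t, \pi_{\theta})$ collects exactly the discounted state distribution $\rho^{\pi}$, giving $\nabla_{\phi} J(z_{\phi}) = \int_{\mathcal{S}} \rho^{\pi}(s) \int_{\mathcal{A}} \nabla_{\phi} \pi_{\theta}(s,a)\, Q^{\pi}(s,a)\,\mathrm{d}a\,\mathrm{d}s$. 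Finally I would apply the log-derivative identity $\nabla_{\phi} \pi_{\theta}(s,a) = \pi_{\theta}(s,a)\, \nabla_{\phi} \log \pi_{\theta}(s,a)$ to rewrite this as the expectation claimed in Equation (\ref{Eq:UppLvlGradSto}).

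Two points deserve care. The first is purely technical: interchanging $\nabla_{\phi}$ with the infinite-horizon sum and with the state/action integrals needs a dominated-convergence-type justification, which holds under the usual regularity assumptions ($\gamma < 1$, bounded rewards, and $\pi_{\theta}$ smooth in its parameters). The second, which I expect to be the real conceptual obstacle, is the meaning of $\nabla_{\phi} \log \pi_{\theta}(s,a)$ itself: since $\pi_{\theta}$ depends on $\phi$ only through the lower-level solution $\theta = \theta(\phi)$, this gradient is really $\big( \partial \theta / \partial \phi \big)^{\!\top} \nabla_{\theta} \log \pi_{\theta}(s,a)$, and the Jacobian $\partial \theta / \partial \phi$ is not available in closed form. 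The theorem deliberately keeps this factor abstract; turning it into something computable --- by differentiating through the lower-level update, by an implicit-function argument, or by an explicit meta-gradient approximation --- is exactly what the three algorithms proposed afterwards must supply.
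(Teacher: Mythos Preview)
Your proposal is correct and follows essentially the same route as the paper's proof: start from $J(z_\phi)=\int_{\mathcal S} p_0(s)V^\pi(s)\,\mathrm ds$, differentiate the Bellman identity for $V^\pi$ with respect to $\phi$ (using that $r$ and $P$ are $\phi$-free), unroll the resulting recursion to recover $\rho^\pi$, and finish with the log-derivative trick. Your remarks on the regularity needed to swap $\nabla_\phi$ with sums/integrals and on the implicit nature of $\nabla_\phi\log\pi_\theta$ match the paper's Assumptions~1--2 and its subsequent discussion exactly.
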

The complete proof is given in Appendix. Note that the theorem is based on the assumption that the gradient of $\pi_{\theta}$ with $\phi$ exists, which is reasonable because in the upper-level optimization, the given policy $\pi_{\theta}$ is the direct result of applying $z_{\phi}$ to reward shaping and in turn $\pi_{\theta}$ can be treated as an implicit function of $z_{\phi}$. However, even with this theorem, we still cannot get the accurate value of $\nabla_{\phi} J(z_{\phi})$ because the gradient of the policy $\pi_{\theta}$ with respect to $\phi$ cannot be directly computed. In the next section, we will show how to approximate $\nabla_{\phi} \pi_{\theta}(s,a)$.

\section{Gradient Approximation} \label{Sec:Algo}
\subsection{Explicit Mapping}
The idea of our first method for approximating $\nabla_{\phi} \pi_{\theta}(s,a)$ is to establish an explicit mapping from the shaping weight function $z_{\phi}$ to the policy $\pi_{\theta}$. Specifically, we redefine the agent's policy as a hyper policy $\pi_{\theta}: \mathcal{S}_{z} \times \mathcal{A} \rightarrow [0,1]$ which can directly react to different shaping weights. Here, $\mathcal{S}_{z} = \{ (s, z_{\phi}(s)) | \forall s \in \mathcal{S} \}$ is the extended state space and $z_{\phi}(s)$ is the shaping weight or shaping weight vector (e.g., $z_{\phi}(s) = (z_{\phi}(s,a_1), ..., z_{\phi}(s, a_{|\mathcal{A}|}))$) corresponding to state $s$. As $z_{\phi}$ is an input of the policy $\pi_{\theta}$, according to the chain rule we have $\nabla_{\phi} \pi_{\theta}(s, a, z_{\phi}(s)) = \nabla_{z} \pi_{\theta}(s,a, z_{\phi}(s)) \nabla_{\phi} z_{\phi}(s)$ and correspondingly the gradient of the upper-level objective function $J(z_{\phi})$ with respect to $\phi$ is
\begin{equation} \label{Eq:Grad_J_Phi_1}
\nabla_{\phi} J(z_{\phi}) = \mathbb{E}_{s \sim \rho^{\pi}, a \sim \pi_{\theta}} \big[ \nabla_{z} \log \pi_{\theta}(s, a, z)|_{z=z_{\phi}(s)} \nabla_{\phi} z_{\phi}(s) Q^{\pi}(s,a) \big].
\end{equation}
We call the first gradient approximating method \textit{explicit mapping} (EM). Now we explain why the extension of the input space of $\pi_{\theta}$ is reasonable. For the lower-level optimization, having $z_{\phi}$ as the input of $\pi_{\theta}$ redefines the modified MDP $\tilde{\mathcal{M}} = \langle \mathcal{S}, \mathcal{A}, P, \tilde{r}, p_0, \gamma \rangle$ as a new MDP $\tilde{\mathcal{M}}_{z} = \langle \mathcal{S}_{z}, \mathcal{A}, P_{z}, \tilde{r}_{z}, p_{z}, \gamma \rangle$, where $\tilde{r}_{z}$, $P_{z}$ and $p_{z}$ also include $z_{\phi}$ in the input space and provide the same rewards, state transitions, and initial state probabilities as $\tilde{r}$, $P$, and $p_0$, respectively. Given the shaping weight function $z_{\phi}$, the one-to-one correspondence between the states in $\mathcal{S}$ and $\mathcal{S}_{z}$ indicates that the two MDPs $\tilde{\mathcal{M}}$ and $\tilde{\mathcal{M}}_{z}$ are equivalent.

\subsection{Meta-Gradient Learning}
Essentially, the policy $\pi_{\theta}$ and the shaping weight function $z_{\phi}$ are related because that their parameters $\theta$ and $\phi$ are related. The idea of our second method is to approximate the meta-gradient $\nabla_{\phi} \theta$ so that $\nabla_{\phi} \pi_{\theta} (s,a)$ can be computed as $\nabla_{\theta} \pi_{\theta}(s,a) \nabla_{\phi} \theta$. Given $\phi$, let $\theta$ and ${\theta}^{\prime}$ denote the policy parameters before and after one round of low-level optimization, respectively. Without loss of generality, we assume that a batch of $N$ ($N > 0$) samples $\mathcal{B} = \{ (s_i, a_i) | i = 1, ..., N \}$ is used for update $\theta$. According to Equation (\ref{Eq:LowLvlGradSto}), we have
\begin{equation} \label{Eq:ThetaToThetaPrime}
{\theta}^{\prime} = \theta + \alpha \sum_{i=1}^{N} \nabla_{\theta} \log \pi_{\theta}(s_i, a_i) \tilde{Q}(s_i, a_i),
\end{equation}
where $\alpha$ is the learning rate. The updated policy $\pi_{{\theta}^{\prime}}$ then will be used for updating $\phi$ in the subsequent round of upper-level optimization, which involves the computation of $\nabla_{\phi} \theta^{\prime}$. For any state-action pair $(s,a)$, we simplify the notation $\nabla_{\theta} \log \pi_{\theta}(s, a)$ as $g_{\theta}(s, a)$. Thus, the gradient of ${\theta}^{\prime}$ with respect to $\phi$ can be computed as 
\begin{equation} \label{Eq:GradThetaPrimePhi}
\nabla_{\phi} {\theta}^{\prime} = \nabla_{\phi} \big( \theta + \alpha \sum_{i=1}^{N} g_{\theta}(s_i, a_i) \tilde{Q}(s_i, a_i) \big) = \alpha \sum_{i=1}^{N} g_{\theta}(s_i, a_i)^{\top} \nabla_{\phi} \tilde{Q}(s_i, a_i).
\end{equation}
We treat $\theta$ as a constant with respect to $\phi$ here because $\theta$ is the result of the last round of low-level optimization rather than the result of applying $z_{\phi}$ in this round. Now the problem is to compute $\nabla_{\phi} \tilde{Q}(s_i, a_i)$. Note that the state-action value function $\tilde{Q}$ can be replaced by any of its unbiased estimations such as the Monte Carlo return. For any sample $i$ in the batch $\mathcal{B}$, let ${\tau}_i = (s_i^0, a_i^0, \tilde{r}_i^0, s_i^1, a_i^1, \tilde{r}_i^1, ...)$ denote the sampled trajectory starting from $(s_i, a_i)$, where $(s_i^0, a_i^0)$ is $(s_i, a_i)$. The modified reward $\tilde{r}_i^t$ at each step $t$ of ${\tau}_i$ can be further denoted as $\tilde{r}_i^t = r_i^t + z_{\phi}(s_i^t, a_i^t) f(s_i^t, a_i^t)$, where $r_i^t$ is the sampled true reward. Therefore, $\nabla_{\phi} {\theta}^{\prime}$ can be approximated as
\begin{equation} \label{Eq:ApproxGradThetaPrimePhi}
\begin{split}
\nabla_{\phi} {\theta}^{\prime} &\approx \alpha \sum_{i=1}^{N} g_{\theta} (s_i, a_i)^{\top} \nabla_{\phi} \sum_{t=0}^{|{\tau}_i|-1} \gamma^t \big( r_i^t + z_{\phi}(s_i^t, a_i^t) f(s_i^t, a_i^t) \big) \\
&= \alpha \sum_{i=1}^{N} g_{\theta} (s_i, a_i)^{\top} \sum_{t=0}^{|{\tau}_i|-1} \gamma^t f(s_i^t, a_i^t) \nabla_{\phi} z_{\phi}(s_i^t, a_i^t).
\end{split}
\end{equation} 

\subsection{Incremental Meta-Gradient Learning}
The third method for gradient approximation is a generalized version of the meta-gradient learning (MGL) method. Recall that in Equation (\ref{Eq:GradThetaPrimePhi}), the old policy parameter $\theta$ is treated as a constant with respect to $\phi$. However, $\theta$ can be also considered as a non-constant with respect to $\phi$ because in the last round of upper-level optimization, $\phi$ is updated according to the true rewards sampled by the old policy $\pi_{\theta}$. Furthermore, as the parameters of the policy and the shaping weight function are updated incrementally round by round, both of them actually are related to all previous versions of the other. Therefore, we can rewrite Equation (\ref{Eq:GradThetaPrimePhi}) as
\begin{equation} \label{Eq:GeneralGradThetaPrimePhi}
\begin{split}
\nabla_{\phi} {\theta}^{\prime} &= \nabla_{\phi} \theta + \alpha \sum_{i=1}^{N} \nabla_{\phi} g_{\theta}(s_i, a_i)^{\top} \tilde{Q}(s_i, a_i) + \alpha \sum_{i=1}^{N} g_{\theta}(s_i, a_i)^{\top} \nabla_{\phi} \tilde{Q}(s_i, a_i) \\
&= \nabla_{\phi} \theta + \alpha \sum_{i=1}^{N} \big( \nabla_{\theta} g_{\theta}(s_i, a_i)^{\top} \nabla_{\phi} \theta \big) \tilde{Q}(s_i, a_i) + \alpha \sum_{i=1}^{N} g_{\theta}(s_i, a_i)^{\top} \nabla_{\phi} \tilde{Q}(s_i, a_i) \\
&= \big( I_n + \alpha \sum_{i=1}^{N} \tilde{Q}(s_i, a_i) \nabla_{\theta} g_{\theta}(s_i, a_i)^{\top} \big) \nabla_{\phi} \theta + \alpha \sum_{i=1}^{N} g_{\theta}(s_i, a_i)^{\top} \nabla_{\phi} \tilde{Q}(s_i, a_i),
\end{split}
\end{equation}
where $I_n$ is a $n$-order identity matrix, and $n$ is the number of parameters in $\theta$. In practice, we can initialize the gradient of the policy parameter with respect to $\phi$ to a zero matrix and update it according to Equation (\ref{Eq:GeneralGradThetaPrimePhi}) iteratively. Due to this incremental computing manner, we call the new method \textit{incremental meta-gradient learning} (IMGL). Actually, the above three methods make different tradeoffs between the accuracy of gradient computation and computational complexity. In Appendix, we will provide complexity analysis of these methods and the details of the full algorithm.

\begin{figure*}
\centering
 \subfigure[Steps and shaping weights in continuous cartpole]
  {
  \label{SubFig:PPO-Conti-CartPole}
  \includegraphics[scale=0.5]{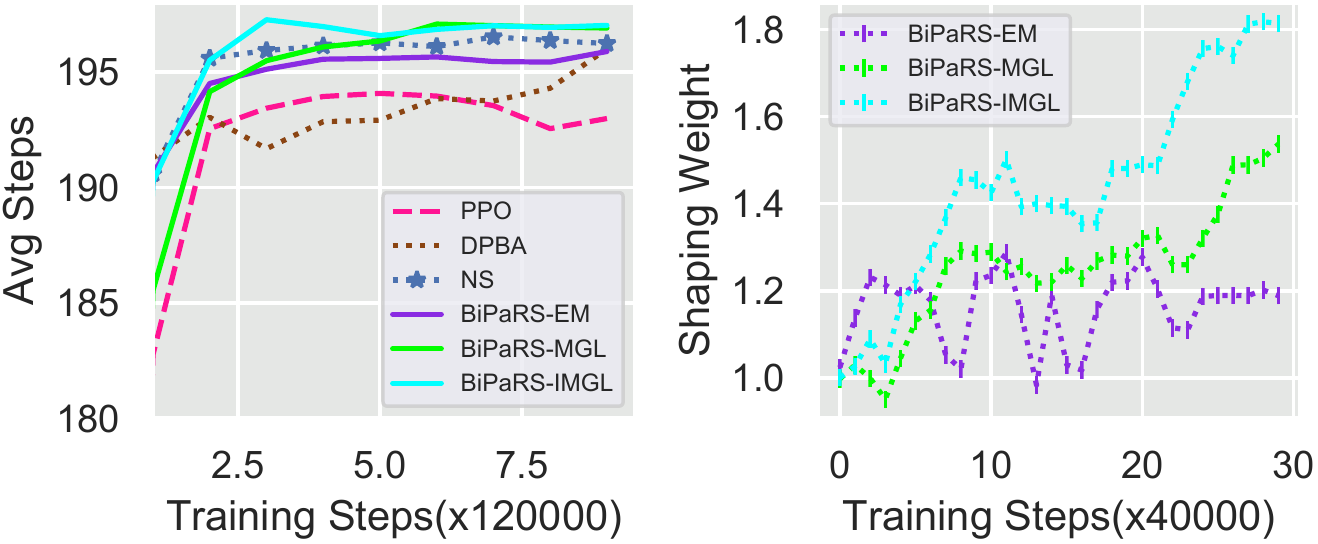}
  }
 \subfigure[Steps and shaping weights in discrete cartpole]
  {
  \label{SubFig:PPO-Disc-CartPole}
  \includegraphics[scale=0.5]{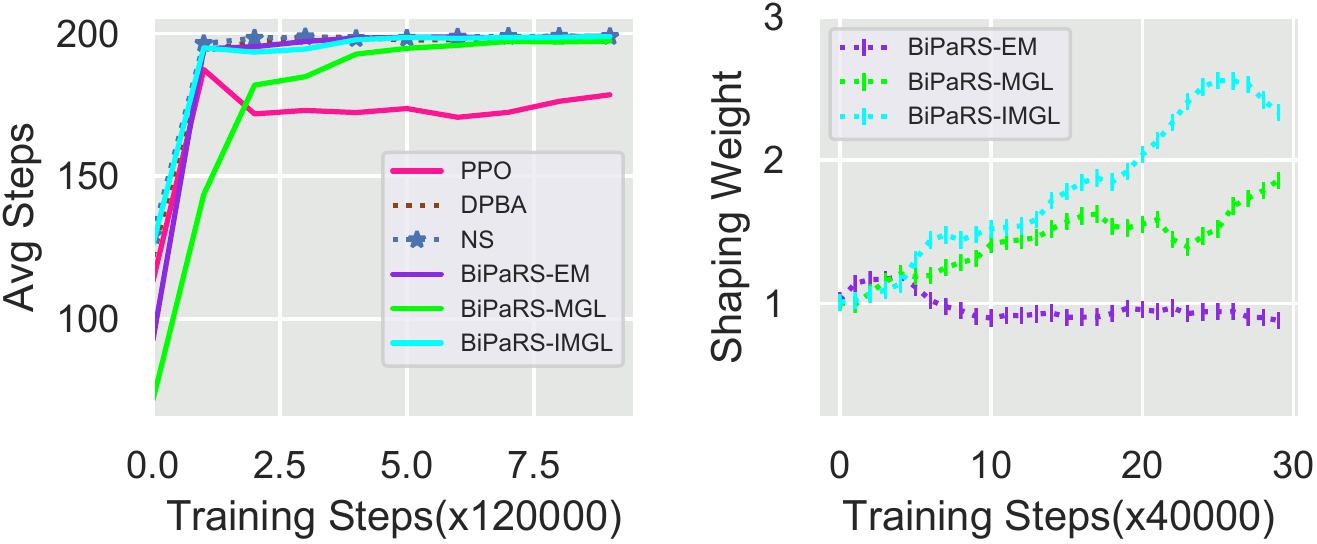}
  }
  \caption{The average step results and the shaping weights of our algorithms in the cartpole tasks \label{Fig:ExpCartPole}}
\end{figure*}

\section{Experiments} \label{Sec:Exp}
We conduct three groups of experiments. The first one is conducted in \textit{cartpole} to verify that our methods can improve the performance of a learning algorithm with beneficial shaping rewards. The second one is conducted in \textit{MuJoCo} to further test our algorithms in more complex problems. The last one is an adaptability test which shows that our methods can learn good policies and correct shaping weights even when the provided shaping reward function is harmful.

\subsection{Sparse-Reward Cartpole} \label{SubSec:ExpClassicControl}
In \textit{cartpole}, the agent should apply a force to the pole to keep it from falling. The agent will receive a reward $-1$ from the environment if the episode ends with the falling of the pole. In other cases, the true reward is zero. The shaping reward for the agent is $0.1$ if the force applied to the cart and the deviation angle of the pole have the same sign. Otherwise, the shaping reward is zero. In short, such a shaping reward function encourages the actions which make the deviation angle smaller and definitely is beneficial. We adopt the PPO algorithm \cite{SchulmanWDRK17} as the base learner and test our algorithms in both continuous and discrete action settings. We evaluate the three versions of our algorithm, namely BiPaRS-EM, BiPaRS-MGL, and BiPaRS-IMGL and compare them with the naive shaping (NS) method which directly adds the shaping rewards to the original ones, and the dynamic potential-based advice (DPBA) method \cite{harutyunyan2015expressing} which transforms an arbitrary reward into a potential value. The details of the \textit{cartpole} problem and the algorithm settings are given in Appendix.

\textbf{Test Setting:} The test of each method contains $1,200,000$ training steps. During the training process, a $20$-episode evaluation is conducted every $4,000$ steps and we record the average steps per episode (ASPE) performance of the tested method at each evaluation point. The ASPE performance stands for how long the agent can keep the pole from falling. The maximal length of an episode is $200$, which means that the highest ASPE value that can be achieved is $200$. To investigate how our algorithms adjust the shaping weights, we also record the average shaping weights of the experienced states or state-action pairs every $4,000$ steps. The shaping weights of our methods are initialized to $1$. 

\textbf{Results:} The results of all tests are averaged over $20$ independent runs using different random seeds and are shown in Figure \ref{Fig:ExpCartPole}. With the provided shaping reward function, all these methods can improve the learning performance of the PPO algorithm (the left columns in Figs \ref{SubFig:PPO-Conti-CartPole} and \ref{SubFig:PPO-Disc-CartPole}). In the continuous-action cartpole task, the performance gap between PPO and the shaping methods is small. In the discrete-action cartpole task, PPO only converges to $170$, but with the shaping methods it almost achieves the highest ASPE value $200$. By investigating the shaping weight curves in Figure \ref{Fig:ExpCartPole}, it can be found that our algorithms successfully recognize the positive utility of the given shaping reward function, since all of them keep outputting positive shaping weights during the learning process. For example, in Figure \ref{SubFig:PPO-Conti-CartPole}, the average shaping weights of the BiPaRS-MGL and BiPaRS-IMGL algorithms start from $1.0$ and finally reach $1.5$ and $1.8$, respectively. The BiPaRS-EM algorithm also learns higher shaping weights than the initial values.

\subsection{MuJoCo} \label{SubSec:ExpMuJoCo}
We choose five MuJoCo tasks \textit{Swimmer-v2}, \textit{Hopper-v2}, \textit{Humanoid-v2}, \textit{Walker2d-v2}, and \textit{HalfCheetah-v2} from OpenAI Gym-v1 to test our algorithms. In each task, the agent is a robot composed of several joints and should decide the amount of torque to apply to each joint in each step. The true reward function is the one predefined in Gym. The shaping reward function is adopted from a recent paper \cite{TesslerMM19} which studies policy optimization with reward constraints and proposes the reward constrained policy optimization (RCPO) algorithm. Specifically, for each state-action pair $(s,a)$, we define its shaping reward $f(s,a) = w (0.25 - \frac{1}{L} \sum_{i=1}^{L} |a_i|)$, where $L$ is the number of joints, $a_i$ is the torque applied to joint $i$, and $w$ is a task-specific weight (given in Appendix) which makes $f(s,a)$ have the same scale as the true reward. Concisely, $f$ requires the average torque amount of each action to be smaller than $0.25$, which most probably can result in a sub-optimal policy \cite{TesslerMM19}. Although such shaping reward function is originally used as constraints for achieving safety, we only care about whether our methods can benefit the maximization of the true rewards and how they will do if the shaping rewards are in conflict with the true rewards.

\textbf{Test Setting:} We adopt PPO as the base learner and still compare our shaping methods with the NS and DPBA methods. The RCPO algorithm \cite{TesslerMM19} is also adopted as a baseline. The tests in \textit{Walker2d-v2} contain $4,800,000$ training steps and the other tests contain $3,200,000$. A $20$-episode evaluation is conducted every 4,000 training steps. We record the average reward per episode (ARPE) performance and the average torque amount of the agent's actions at each evaluation point. The maximal length of an episode is $1000$. 

\begin{figure*}
\centering
 \subfigure[Swimmer-v2]
  {
  \label{SubFig:Swimmer}
  \includegraphics[scale=0.45]{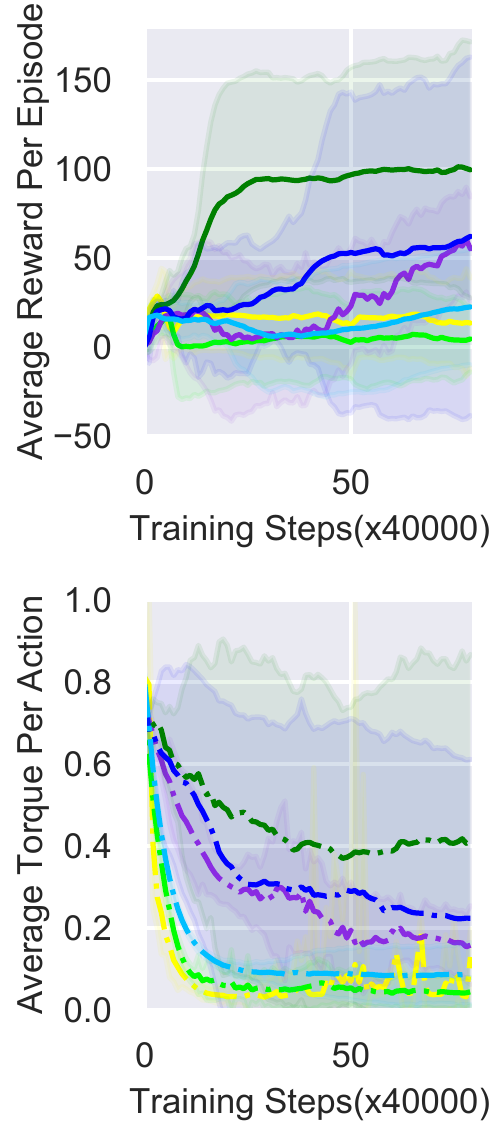}
  }
 \subfigure[Hopper-v2]
  {
  \label{SubFig:Hopper}
  \includegraphics[scale=0.45]{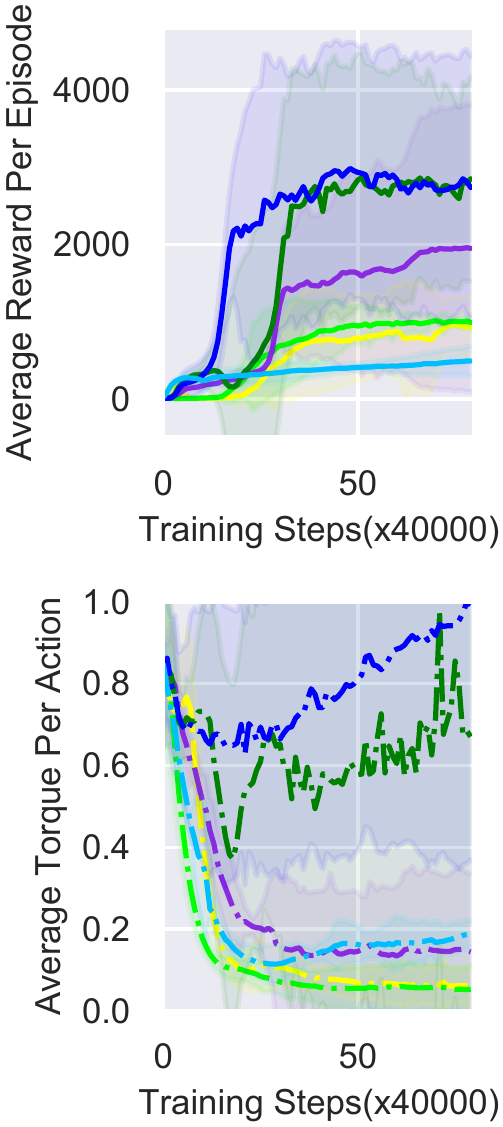}
  }
 \subfigure[Humanoid-v2]
  {
  \label{SubFig:Humanoid}
  \includegraphics[scale=0.45]{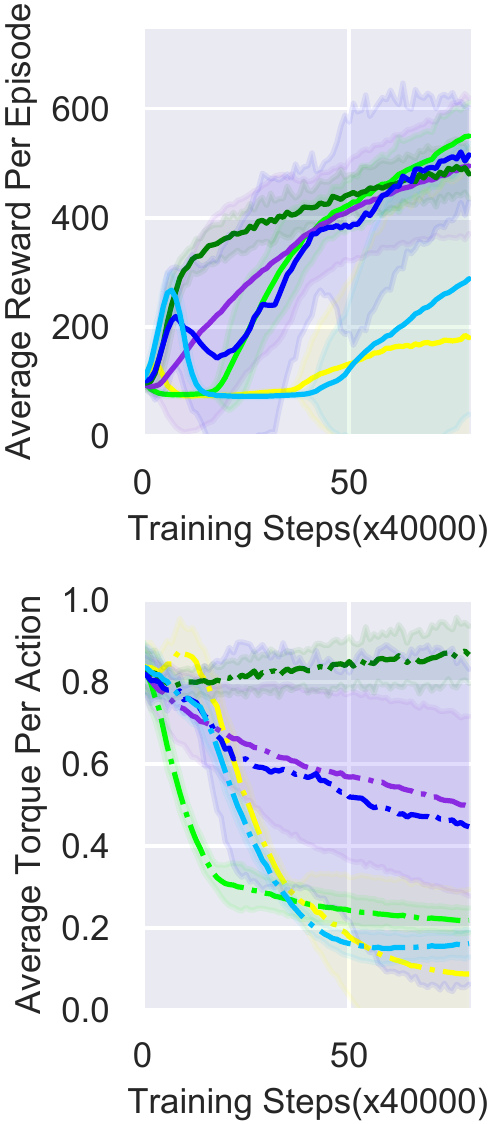}
  }
  \subfigure[Walker2d-v2]
  {
  \label{SubFig:Walker2d}
  \includegraphics[scale=0.45]{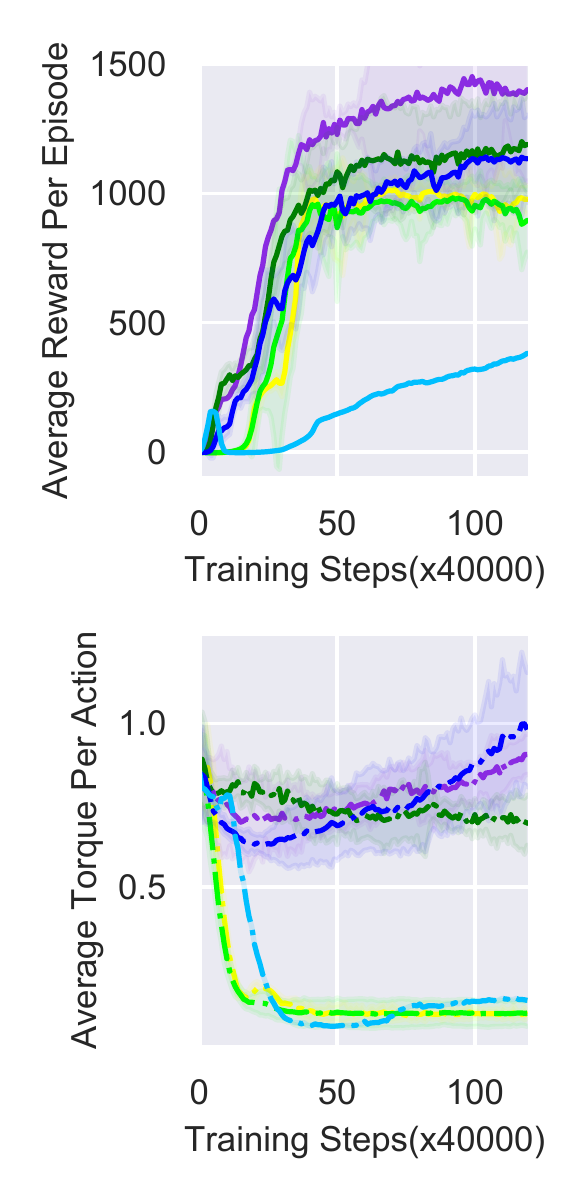}
  }
  \subfigure[HalfCheetah-v2]
  {
  \label{SubFig:HalfCheetah}
  \includegraphics[scale=0.45]{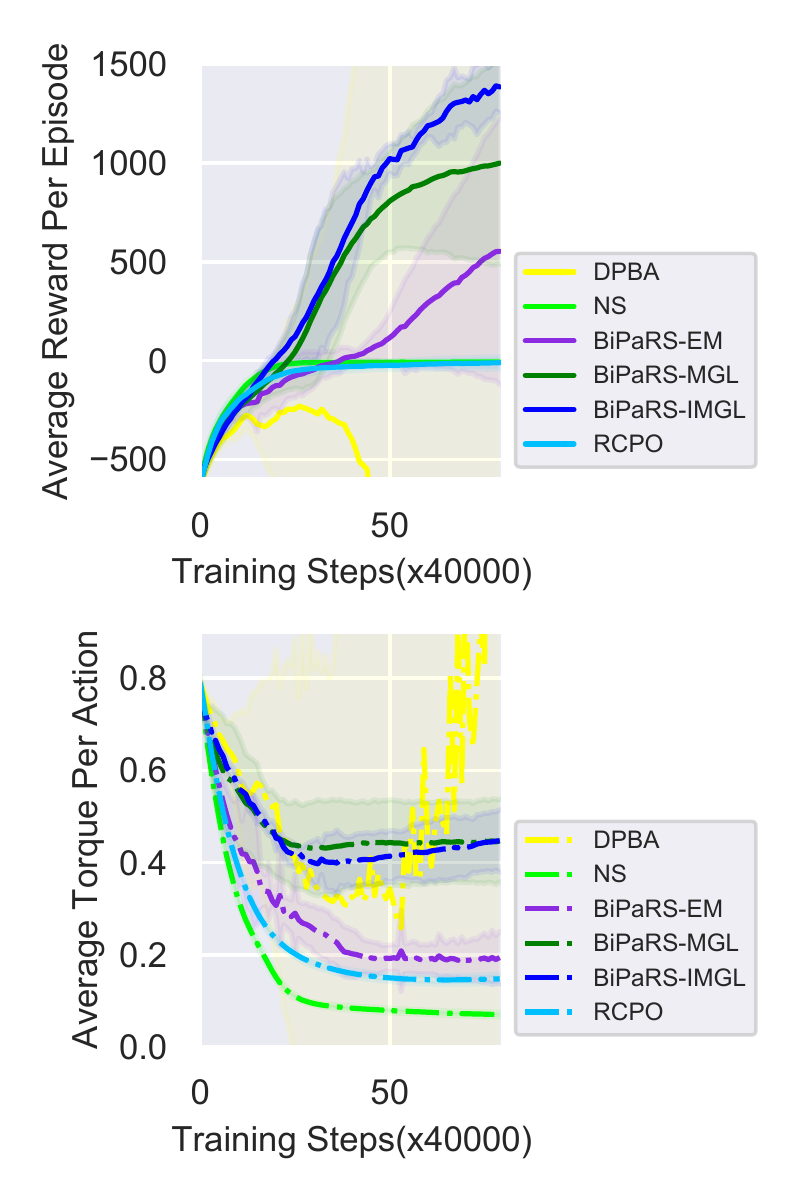}
  }
  \caption{Results of the MuJoCo experiment. The shaded areas are $95\%$ confidence intervals  \label{Fig:ExpMujoco}}
\end{figure*}

\textbf{Results:} The results of the MuJoCo tests are averaged over $10$ runs with different random seeds and are shown in Figure \ref{Fig:ExpMujoco}. Our algorithms adapt well to the given shaping reward function in each test. For example, in the \textit{Hopper-v2} task (Figure \ref{SubFig:Hopper}), BiPaRS-MGL and BiPaRS-IMGL are far better than the other methods. The BiPaRS-EM method performs slightly worse, but is still better than the baseline methods. The torque amount curves of the tested algorithms are consistent with their reward curves. In some tasks such as \textit{Swimmer-v2} and \textit{HalfCheetah-v2}, the BiPaRS methods slow down the decaying speed of the agent's torque amount. In the other tasks such as \textit{Hopper-v2} and \textit{Walker2d-v2}, our methods even raise the torque amount values. Interestingly, in Figure \ref{SubFig:Hopper} we can find a v-shaped segment on each of the torque curves of BiPaRS-MGL and BiPaRS-IMGL. This indicates that the two algorithms try to reduce the torque amount of actions at first, but do the opposite after realizing that a small torque amount is unbeneficial. The algorithm performance in our \textit{MuJoCo} test may not match the state-of-the-art in the DRL domain since $f$ brings additional difficulty for policy optimization. It should also be noted that the goal of this test is to show the disadvantage of fully exploiting the given shaping rewards and our algorithms have the ability to avoid this.

\subsection{Adaptability Test} \label{SubSec:ExpAdaptTest}
Three additional tests are conducted in \textit{cartpole} to further investigate the adaptability of our methods. In the first test, we adopt a harmful shaping reward function which gives the agent a reward $-0.1$ when the deviation angle of the pole becomes smaller. In the second test, the same shaping reward function is adopted, but our BiPaRS algorithms reuse the shaping weight functions learnt in the first test and only optimize the policy. The goal of the two tests is to show that our methods can identify the harmful shaping rewards and the learnt shaping weights are helpful. In the third test, we use shaping rewards randomly distributed in $[-1, 1]$ to investigate how our methods adapt to an arbitrary shaping reward function. The other settings are the same as the original \textit{cartpole} experiment.

\begin{figure*}
\centering
 \subfigure[Steps and shaping weights in continuous cartpole]
  {
  \label{SubFig:PPO-Conti-CartPole-Rev}
  \includegraphics[scale=0.5]{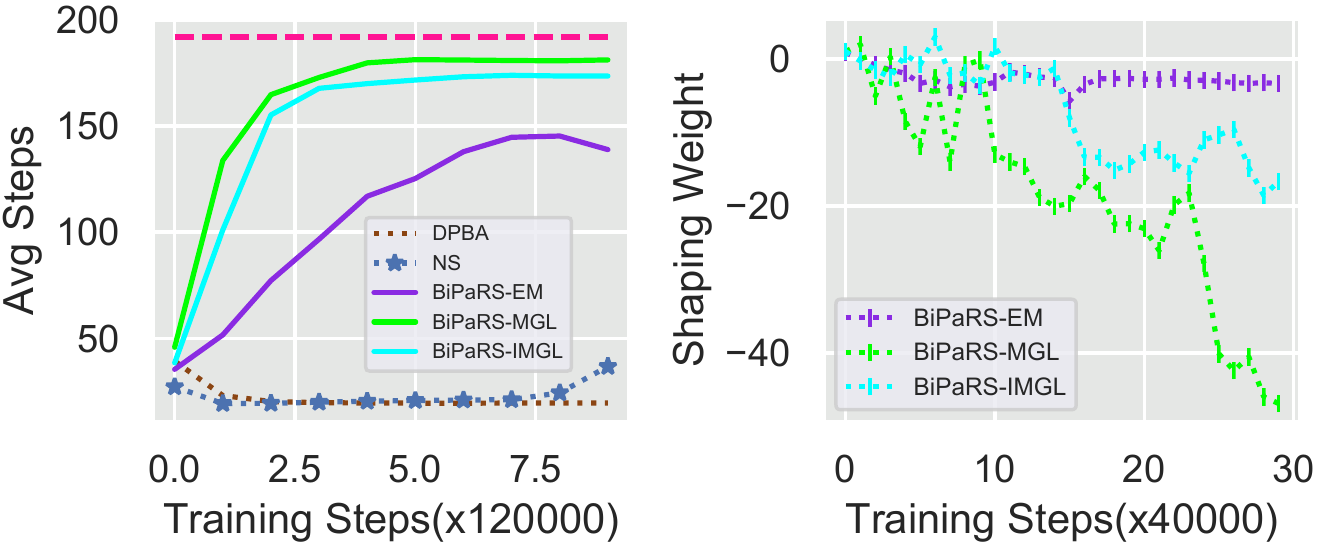}
  }
 \subfigure[Steps and shaping weights in discrete cartpole]
  {
  \label{SubFig:PPO-Disc-CartPole-Rev}
  \includegraphics[scale=0.5]{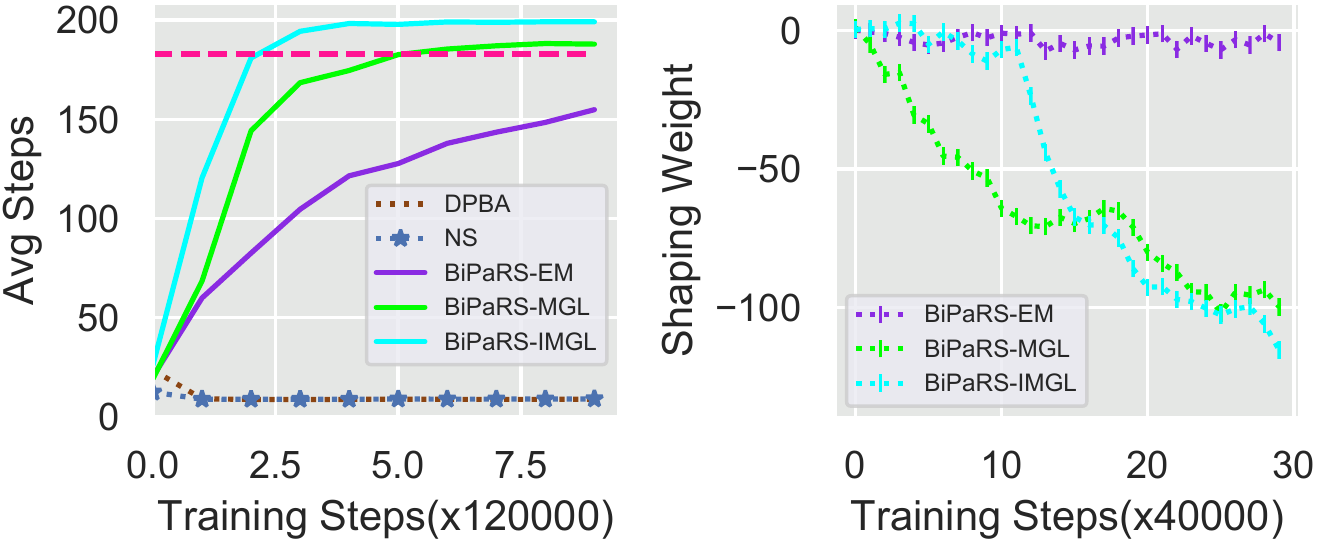}
  }
  \caption{The average step and shaping weight curves of our algorithms in cartpole with a harmful shaping reward function. The magenta lines represent the values that PPO reaches in Figure \ref{Fig:ExpCartPole} \label{Fig:ExpCartPole_Rev}}
\end{figure*}

\begin{figure*}
\centering
 \subfigure[The results of the second adaptability test]
  {
  \label{SubFig:PPO-CartPole-Reload}
  \includegraphics[scale=0.5]{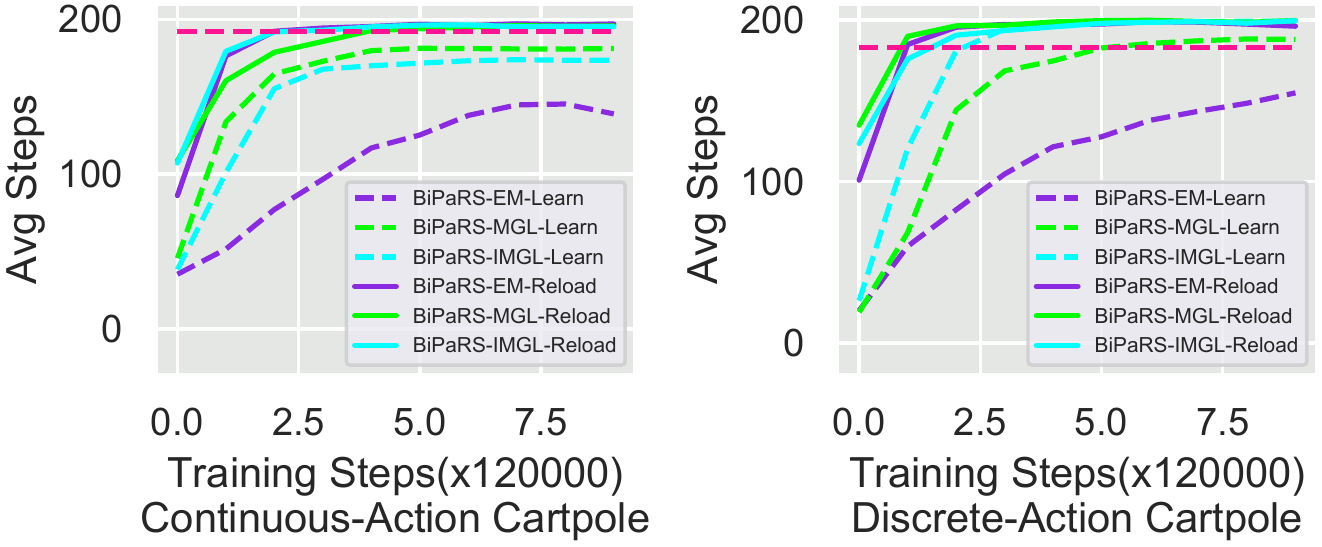}
  }
  \subfigure[The results of the third adaptability test]
  {
  \label{SubFig:PPO-CartPole-Random}
  \includegraphics[scale=0.5]{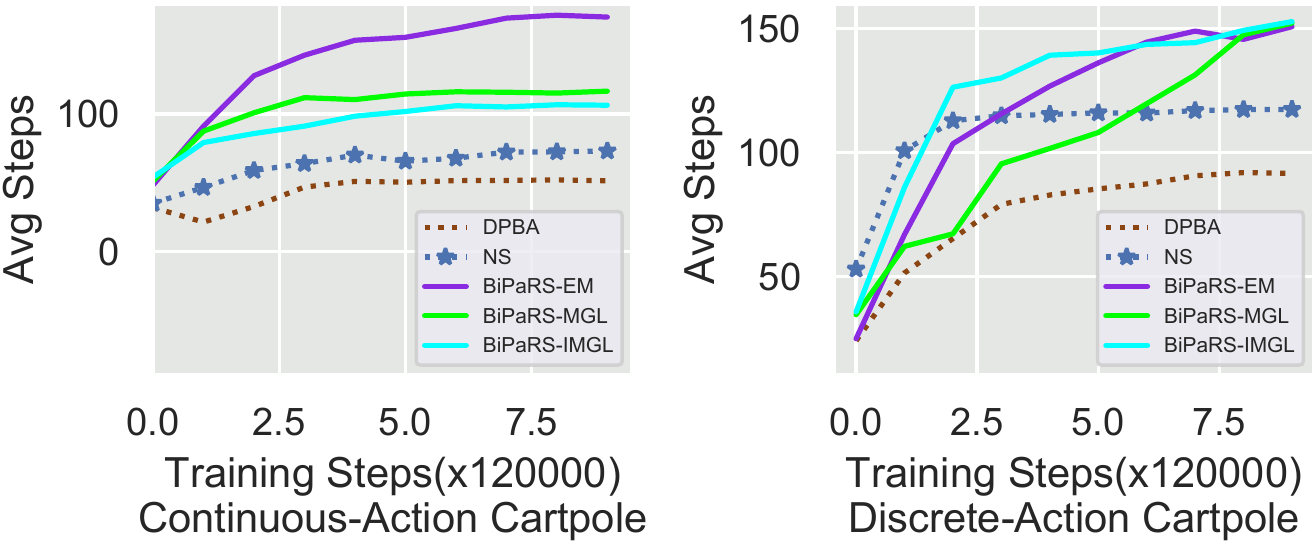}
  }
  \caption{The results of the second adaptability test where the BiPaRS algorithms reload the learnt shaping weight functions and the third adaptability test where the shaping rewards are randomly distributed in $[-1, 1]$ \label{Fig:ExpCartPole_Apt}}
\end{figure*}

The results of the first test are shown in Figure \ref{Fig:ExpCartPole_Rev}. The unbeneficial shaping rewards inevitably hurt the policy optimization process, so directly comparing the learning performance of PPO with the tested shaping methods is meaningless. Instead, in Figure \ref{Fig:ExpCartPole_Rev} we plot the step values achieved by PPO in experiment \ref{SubSec:ExpClassicControl} as reference (the magenta lines). We can find that the NS and DPBA methods perform poorly since their step curves stay at a low level in the whole training process. In contrast, our methods recognize that the shaping reward function is harmful and have made efforts to reduce its bad influence. For example, in Figure \ref{SubFig:PPO-Conti-CartPole-Rev} BiPaRS-EM finally achieves a decent ASPE value $130$, and the step curves of BiPaRS-MGL and BiPaRS-IMGL are very close to the reference line (around $180$). In discrete-action \textit{cartpole}, the two methods even achieve higher values than the reference line (Figure \ref{SubFig:PPO-Disc-CartPole-Rev}). The negative shaping weights shown in right columns of Figures \ref{SubFig:PPO-Conti-CartPole-Rev} and \ref{SubFig:PPO-Disc-CartPole-Rev} indicate that our methods are trying to transform the harmful shaping rewards into beneficial ones.

We put the results of the second and third tests in Figures \ref{SubFig:PPO-CartPole-Reload} and \ref{SubFig:PPO-CartPole-Random}, respectively. Now examine Figure \ref{SubFig:PPO-CartPole-Reload}, where the dashed lines are the step curves of our methods in the first test. Obviously, the reloaded shaping weight functions enables the BiPaRS methods to learn much better and faster. Furthermore, the learnt policies are better than the policy learnt by PPO, which proves that the learnt shaping weights are correct and helpful. In the third adaptability test, our methods also perform better than the baseline methods. The BiPaRS-EM method achieves the highest step value (about $170$) in the continuous-action cartpole, and all our three methods reach the highest step value $150$ in the discrete-action cartpole. Note that the third test is no easier than the first one because an algorithm may get stuck with opposite shaping reward signals of similar state-action pairs.

\subsection{Learning State-Dependent Shaping Weights}
One may have noticed that the shaping reward functions adopted in the above experiments (except the random one in the third adaptability test) are either fully helpful or fully harmful, which means that a uniform shaping weight may also work for adaptive utilization of the shaping rewards. In fact, the shaping weights learnt by our methods differ slightly across the state-action space in the cartpole experiment in Section \ref{SubSec:ExpClassicControl} and the first adaptability test in Section \ref{SubSec:ExpAdaptTest}. Learning a state-action-independent shaping weight in the two tests seems sufficient. We implement the baseline method which replaces the shaping weight function $z_{\phi}$ with a single shaping weight and test it in the two experiments. It can be found from the corresponding results (Figures \ref{SubFig:PPO-Conti-Cartpole-Pos-SingleWeight} and \ref{SubFig:PPO-Conti-Cartpole-Neg-SingleWeight}) that the single-weight method learns better and faster than all the other methods. Although our methods can identify the benefits or harmfulness of a given shaping reward function, it is still unclear whether they have the ability to learn a state-dependent or state-action-depenent shaping weight function. Therefore, we conduct an additional experiment in the continuous-action cartpole environment with half of the shaping rewards are helpful and the other half of the shaping rewards are harmful. Specifically, the shaping reward is $0.1$ if the action taken by the agent reduces the deviation angle when the pole inclines to the right. However, if the action of the agent reduces the deviation angle when the pole inclines to the left, the corresponding shaping reward is $-0.1$. The result is shown in Figure \ref{SubFig:PPO-Conti-Cartpole-Half}. It can be found that our methods perform the best and the single-weight baseline method cannot perform as well as in Figures \ref{SubFig:PPO-Conti-Cartpole-Pos-SingleWeight} and \ref{SubFig:PPO-Conti-Cartpole-Neg-SingleWeight}. For better illustration, we also plot the shaping weights learnt by the BiPaRS-EM method across a subset of the state space (containing 100 states) as a heat map in Figure \ref{SubFig:PPO-Conti-Cartpole-Half-HeatMap}. Note that a state in the cartpole task is represented by a $4$-dimensional vector. For the $100$ states used for drawing the heat map, we fix the car velocity ($1.0$) and pole velocity ($0.01$) features, and only change the car position and pole angle features. From the heat map figure, we can see that as either of the two state feature changes, the corresponding shaping weight also changes considerably, which indicates that state-dedependent shaping weights are learnt.

\begin{figure}
\centering
 \subfigure[]
  {
  \label{SubFig:PPO-Conti-Cartpole-Pos-SingleWeight}
  \includegraphics[scale=0.5]{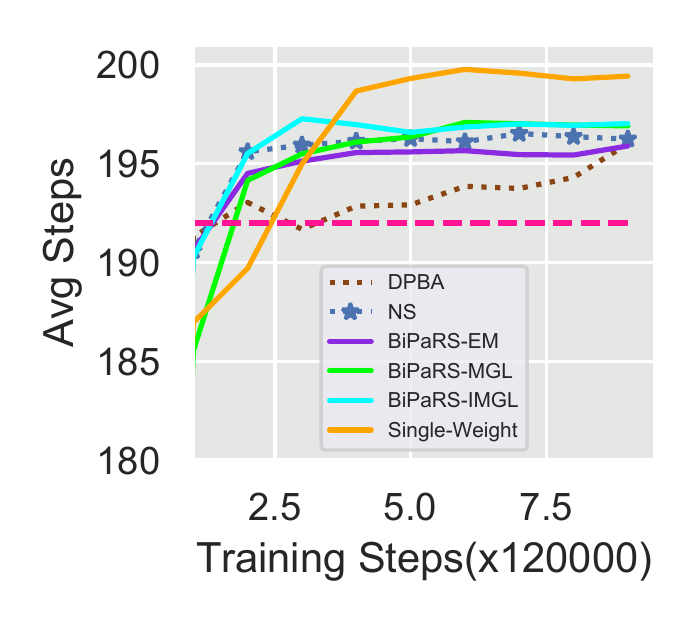}
  }
 \subfigure[]
  {
  \label{SubFig:PPO-Conti-Cartpole-Neg-SingleWeight}
  \includegraphics[scale=0.5]{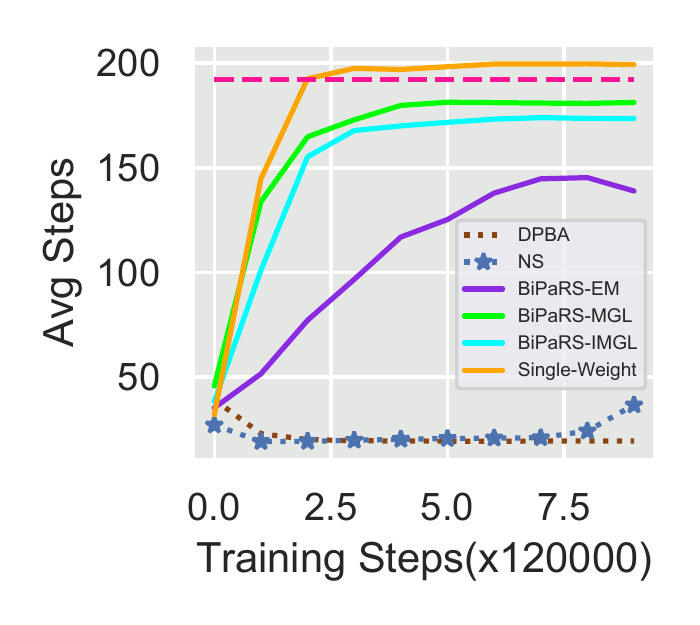}
  }
 \subfigure[]
  {
  \label{SubFig:PPO-Conti-Cartpole-Half}
  \includegraphics[scale=0.5]{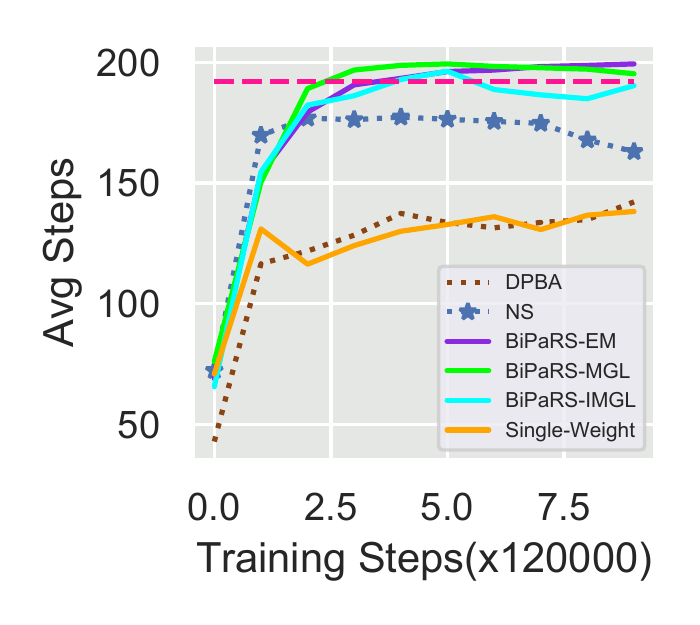}
  }
  \subfigure[]
  {
  \label{SubFig:PPO-Conti-Cartpole-Half-HeatMap}
  \includegraphics[scale=0.56]{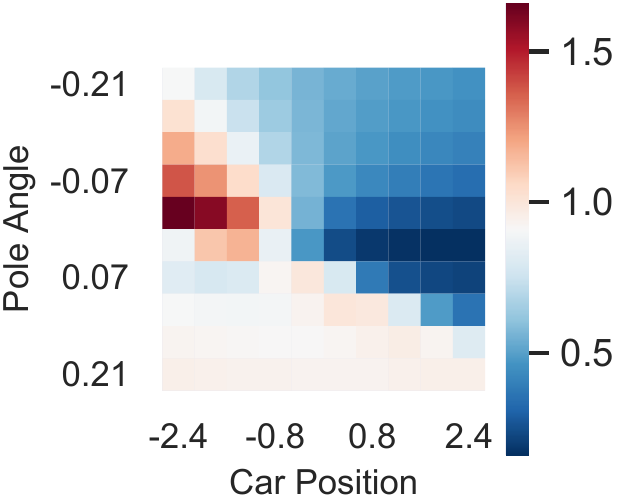}
  }
  \caption{Results of the additional experiments. Figures 5(a) and 5(b) are the results of the single-weight baseline method in continuous-action carpole with beneficial and harmful shaping rewards, respectively. Figure 5(c) is the result of the experiment with beneficial and harmful shaping rewards in different states, and Figure 5(d) is the heat map of BiPaRS-EM's shaping weights of $100$ states \label{Fig:AddExp}}
\end{figure}

\section{Conclusions} \label{Sec:Con}
In this work, we propose the \textit{bi-level optimization of parameterized reward shaping} (BiPaRS) approach for adaptively utilizing a given shaping reward function. We formulate the utilization problem of shaping rewards, provide formal results for gradient computation, and propose three learning algorithms for solving this problem. The results in cartpole and MuJoCo tasks show that our algorithms can fully exploit beneficial shaping rewards, and meanwhile ignore unbeneficial shaping rewards or even transform them into beneficial ones.

\section*{Broader Impact}
Reward design is an important and difficult problem in real-world applications of reinforcement learning (RL). In many cases, researchers or algorithm engineers have some prior knowledge (such as rules and constraints) about the problem to be solved, but cannot represent the knowledge as numeric values very exactly. Improper reward settings may be exploited by an RL algorithm to obtain higher rewards, but with unexpected behaviors learnt. This paper provides an adaptive approach of reward shaping to avoid the repeated and tedious tuning of rewards in RL applications (e.g., video games). A direct impact of our paper is that researchers or algorithm engineers can be liberated from hard work of reward tuning. The shaping weights learnt by our methods indicate the quality of the designed rewards, and thus can help the designers to better understand the problems. Our paper also proposes a general principle for utilizing prior knowledge in the machine learning domain, namely trying to get rid of human cognitive error when the qualitative or rule-based prior knowledge is transformed into numeric values to help with learning.

\begin{ack}
We thank the reviewers for the valuable comments and suggestions. Weixun Wang and Jianye Hao are supported by the National Natural Science Foundation of China (Grant Nos.: 61702362, U1836214), Special Program of Artificial Intelligence and Special Program of Artificial Intelligence of Tianjin Municipal Science and Technology Commission (No.: 569 17ZXRGGX00150). Yixiang Wang and Feng Wu are supported in part by the National Natural Science Foundation of China (Grant No. U1613216, Grant No. 61603368).
\end{ack}

\bibliography{neurips_2020}
\bibliographystyle{plain}

\appendix

\setcounter{figure}{2}
\renewcommand{\theequation}{A.\arabic{equation}}

\section{Complexity and Algorithm} \label{Sec:Algo}

\subsection{Complexity Analysis}
In this section, we provide the complexity analysis of the gradient approximation methods proposed in Section 4 and show the full algorithm for solving the bi-level optimization of parameterized reward shaping (BiPaRS) problem. We denote the number of parameters of the policy function $\pi_{\theta}$ by $n$ and the number of parameters of the shaping weight function $z_{\phi}$ by $m$, respectively.

\textbf{Explicit Mapping}: The explicit mapping (EM) method includes the shaping weight function $z_{\phi}$ as an input of $\pi_{\theta}$ and approximately computes $\nabla_{\phi} \log \pi_{\theta}(s,a)$ as $\nabla_{z} \log \pi_{\theta}(s, a, z) |_{z = z_{\phi}(s)} \nabla_{\phi} z_{\phi}(s)$. Therefore, the computational complexity of the EM method is $\mathcal{O}(m)$.

\textbf{Meta-Gradient Learning}: Given the shaping weight function parameter $\phi$, let $\theta$ and ${\theta}^{\prime}$ denote the policy parameters before and after one round of low-level optimization, and assume that a batch of $N$ ($N > 0$) samples $\mathcal{B} = \{ (s_i, a_i) | i = 1, ..., N \}$ is used for updating $\theta$. The meta-gradient learning (MGL) method computes $\nabla_{\phi} \theta^{\prime}$ as 
\begin{equation} \label{Eq:MGL}
\begin{split}
\nabla_{\phi} {\theta}^{\prime} &\approx \alpha \sum_{i=1}^{N} g_{\theta} (s_i, a_i)^{\top} \nabla_{\phi} \sum_{t=0}^{|{\tau}_i|-1} \gamma^t \big( r_i^t + z_{\phi}(s_i^t, a_i^t) f(s_i^t, a_i^t) \big) \\
&= \alpha \sum_{i=1}^{N} g_{\theta} (s_i, a_i)^{\top} \sum_{t=0}^{|{\tau}_i|-1} \gamma^t f(s_i^t, a_i^t) \nabla_{\phi} z_{\phi}(s_i^t, a_i^t).
\end{split}
\end{equation}
It seems that the computational complexity and space complexity of the (MGL) method are $\mathcal{O}(N n m)$. However, we can reduce both of them to $\mathcal{O}(N(n + m))$. Note that Equation (\ref{Eq:MGL}) should be integrated with Theorem 1 for computing the gradient of the expected true reward $J(z_{\phi})$ with respect to $\phi$ in the upper-level optimization. Without loss of generality, we assume only one state-action pair $(s,a)$ is used to update $\phi$ under the new policy $\theta^{\prime}$. By substituting Equation (\ref{Eq:MGL}) into Theorem 1, we have 
\begin{equation} \label{Eq:Grad_J_Phi_MGL}
\begin{split}
\nabla_{\phi} J(z_{\phi}) &\approx \nabla_{\phi} \log \pi_{\theta^{\prime}}(s,a) Q(s,a) \\
&= \nabla_{\theta^{\prime}} \log \pi_{\theta^{\prime}}(s,a) \nabla_{\phi} \theta^{\prime} Q(s,a) \\
&\approx \alpha \nabla_{\theta^{\prime}} \log \pi_{\theta^{\prime}}(s,a) \Big( \sum_{i=1}^{N} g_{\theta} (s_i, a_i)^{\top} \sum_{t=0}^{|{\tau}_i|-1} \gamma^t f(s_i^t, a_i^t) \nabla_{\phi} z_{\phi}(s_i^t, a_i^t) \Big) Q(s,a),
\end{split}
\end{equation} 
where $Q(s,a)$ is the state-action value function under the policy $\pi_{\theta^{\prime}}$. Actually, we can change the computing order of Equation (\ref{Eq:Grad_J_Phi_MGL}) and compute it as 
\begin{equation} \label{Eq:Grad_J_Phi_MGL_simple}
\nabla_{\phi} J(z_{\phi}) \approx \alpha \sum_{i=1}^{N} \big( \nabla_{\theta^{\prime}} \log \pi_{\theta^{\prime}}(s,a) Q(s,a) g_{\theta} (s_i, a_i)^{\top} \big) \sum_{t=0}^{|{\tau}_i|-1} \gamma^t f(s_i^t, a_i^t) \nabla_{\phi} z_{\phi}(s_i^t, a_i^t).
\end{equation}
For each $i$ in the sum loop of this equation, computing the term $\nabla_{\theta^{\prime}} \log \pi_{\theta^{\prime}}(s,a) Q(s,a) g_{\theta} (s_i, a_i)^{\top}$ costs $\mathcal{O}(n)$ time and space, computing the term $\sum_{t=0}^{|{\tau}_i|-1} \gamma^t f(s_i^t, a_i^t) \nabla_{\phi} z_{\phi}(s_i^t, a_i^t)$ costs $\mathcal{O}(m)$ time and space (we treat $|{\tau}_i|$ as a constant), and computing the product of these two terms also costs $\mathcal{O}(m)$ time and space. Therefore, the space complexity and computational complexity for computing Equation (\ref{Eq:Grad_J_Phi_MGL_simple}) are $\mathcal{O}(N(n+m))$.

\textbf{Incremental Meta-Gradient Learning:} The incremental meta-gradient learning (IMGL) is a generalized version of the MGL method. We still let $\theta$ and ${\theta}^{\prime}$ denote the policy parameters before and after one round of low-level optimization, and assume that a batch of $N$ ($N > 0$) samples $\mathcal{B} = \{ (s_i, a_i) | i = 1, ..., N \}$ is used to update $\theta$. The IMGL method computes $\nabla_{\phi} \theta^{\prime}$ as  
\begin{equation} \label{Eq:Grad_J_Phi_IMGL}
\begin{split}
\nabla_{\phi} {\theta}^{\prime} &= \nabla_{\phi} \theta + \alpha \sum_{i=1}^{N} \nabla_{\phi} g_{\theta}(s_i, a_i)^{\top} \tilde{Q}(s_i, a_i) + \alpha \sum_{i=1}^{N} g_{\theta}(s_i, a_i)^{\top} \nabla_{\phi} \tilde{Q}(s_i, a_i) \\
&= \big( I_n + \alpha \sum_{i=1}^{N} \tilde{Q}(s_i, a_i) \nabla_{\theta} g_{\theta}(s_i, a_i)^{\top} \big) \nabla_{\phi} \theta + \alpha \sum_{i=1}^{N} g_{\theta}(s_i, a_i)^{\top} \nabla_{\phi} \tilde{Q}(s_i, a_i),
\end{split}
\end{equation}
where $I_n$ is a $n$-order identity matrix and $\tilde{Q}$ denotes the state-action value function in the modified MDP. Compared with the EM and MGL methods, IMGL is much more computationally expensive since the term $\alpha \sum_{i=1}^{N} \tilde{Q}(s_i, a_i) \nabla_{\theta} g_{\theta}(s_i, a_i)^{\top}$ involves the computation of the Hessian matrix $\nabla_{\theta} g_{\theta}(s_i, a_i)^{\top}$ for each $i$, which has $\mathcal{O}(N n^3)$ computational complexity. However, there are several ways for reducing the high computational cost of IMGL. Firstly, the Hessian matrix $\nabla_{\theta} g_{\theta}(s_i, a_i)^{\top}$ can be approximated using outer product of gradients (OPG) estimate. Secondly, for simple problems, we can use a small policy model with a few parameters. Lastly, we can even omit the second-order term $\alpha \sum_{i=1}^{N} \tilde{Q}(s_i, a_i) \nabla_{\theta} g_{\theta}(s_i, a_i)^{\top}$ to get another approximation of $\nabla_{\phi} {\theta}^{\prime}$.

\begin{figure*}
\centering
\includegraphics[scale=0.7]{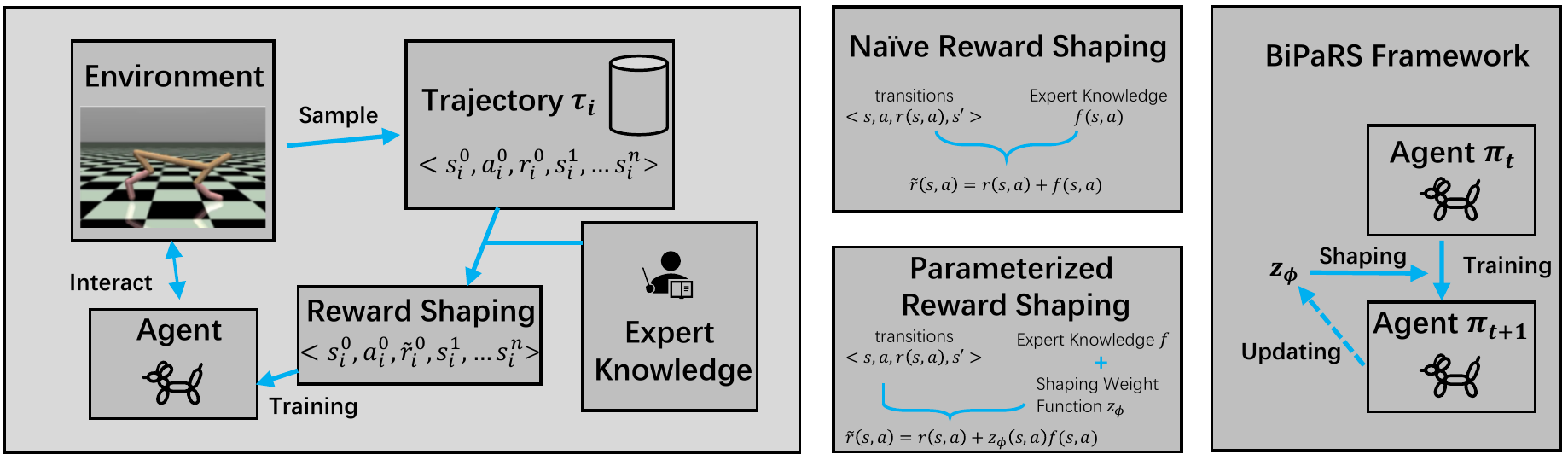}
\caption{An overview of the BiPaRS framework \label{Fig:BiPaRSFramework}}
\end{figure*}

\subsection{The Full Algorithm}
The main workflow of the BiPaRS framework can be illustrated by Figure \ref{Fig:BiPaRSFramework}. In this figure, the left column shows the training loop of the agent's policy, where the expert knowledge (i.e., shaping reward) is integrated with the samples generated from the agent-environment interaction. The centeral column shows the difference between the parameterized reward shaping and the traditional reward shaping methods, namely the shaping weight function $z_{\phi}$ for adaptive utilization of the shaping reward function $f$. The right column of the figure intuitively shows the alternating optimization process of the policy and shaping weight function.

Now we summarize all the methods proposed in our paper into the general learning algorithm in Algorithm \ref{Alg:BiPaRS}. This algorithm actually corresponds to three specifi learning algorithms which adopt the explicit mapping (EM), meta-gradient learning (MGL), and incremental meta-gradient learning (IMGL) methods for gradient approximation, respectively. As shown in the algorithm table, the policy parameter $\theta$ and the parameter of the shaping weight function $\phi$ are optimized iteratively. At each iteration $t$, $\theta$ is firstly updated according to the shaping rewards (lines $5$ to $14$), which are weighted by the shaping weight function $z_{\phi}$. If MGL or IMGL is chosen as the method for gradient approximation, then the meta-gradient $\nabla_{\phi} \theta$, which is denoted by the variable $h$, will be computed at the same time (lines $9$ to $14$), where $\nabla_{\theta} \log \pi_{\theta}(s,a)$ is simplified as $g_{\theta}(s,a)$. After updating $\theta$, $\phi$ is updated based on the true rewards sampled by the new policy $\pi_{\theta}$ and the approximated gradient of $\pi_{\theta}$ with respect to $\phi$ (lines $15$ to $22$). For the EM method (line $21$), we do not explicitly represent $\pi_{\theta}$ as a function of $z_{\phi}$ in order to keep the consistency of notations. We also omit some details in Algorithm \ref{Alg:BiPaRS}, such as the learning of the two value functions $Q$ and $\tilde{Q}$, and the computation of the gradient $\nabla_{\phi} \tilde{Q}(s,a)$.

\begin{algorithm2e}[hhh]
\caption{Bilevel Optimization of Parameterized Reward Shaping (BiPaRS)}
\label{Alg:BiPaRS}
\begin{small}
\KwIn{Learning rates $\alpha_{\theta}$ and $\alpha_{\phi}$}
Initialize the policy parameter $\theta$ and the shaping weight function parameter $\phi$\;
Initialize the true and shaping value functions $Q$ and $\tilde{Q}$\;
Initialize the meta-gradient $h$ to a zero matrix\;
\For{$t = 1, 2, ..., $}{
    Run policy $\pi_{\theta}$ in the modified MDP with $z_{\phi}$\;
    $\mathcal{T}^{\prime} \leftarrow $ the set of sampled experiences\;
    Update $\tilde{Q}$ using samples from $\mathcal{T}^{\prime}$\;
    ${\theta}^{\prime} \leftarrow \theta + \alpha_{\theta} \sum_{(s,a) \sim \mathcal{T}^{\prime}} \nabla_{\theta} \log \pi_{\theta}(s,a) \tilde{Q}(s,a)$\;
    \If{MGL is used}{
      $h \leftarrow \alpha_{\theta} \sum_{(s,a) \sim \mathcal{T}^{\prime}} g_{\theta}(s,a)^{\top} \nabla_{\phi} \tilde{Q}(s,a) $\;
    }
    \ElseIf{IMGL is used}{
      $h_1 \leftarrow \big( \sum_{(s,a) \sim \mathcal{T}^{\prime}} \nabla_{\theta} g_{\theta}(s,a)^{\top} \tilde{Q}(s,a) \big) h$\;
      $h_2 \leftarrow \sum_{(s,a) \sim \mathcal{T}^{\prime}} g_{\theta}(s,a) \nabla_{\phi} \tilde{Q}(s,a) $\;
      $h \leftarrow h + \alpha_{\theta} (h_1 + h_2)$\;
    }
    Run policy $\pi_{{\theta}^{\prime}}$ in the original MDP\;
    $\mathcal{T} \leftarrow $ the set of sampled experiences\;
    Update $Q$ using samples from $\mathcal{T}$\;
    \If{MGL or IMGL is used}{
      $\Delta \phi \leftarrow \sum_{(s,a) \sim \mathcal{T}} \nabla_{{\theta}^{\prime}} \log \pi_{{\theta}^{\prime}}(s,a) Q(s,a) h$\;
    }
    \Else{
      $\Delta \phi \leftarrow \sum_{(s,a) \sim \mathcal{T}} \nabla_{z} \log \pi_{{\theta}^{\prime}}(s,a) \nabla_{\phi} z_{\phi}(s) Q(s,a) $\;
    }
    $\phi \leftarrow \phi + \alpha_{\phi} \Delta \phi$\;
    $\theta \leftarrow {\theta}^{\prime}$\;
}
\end{small}
\end{algorithm2e}

\renewcommand{\theequation}{B.\arabic{equation}}
\section{Theorem Proof}
This section gives the proof of Theorem 1. We make the following assumptions.
\begin{assumption} \label{Assumption:1}
Let $\mathcal{M} = \langle \mathcal{S}, \mathcal{A}, P, r, p_0, \gamma \rangle$ denote the original MDP, $\pi_{\theta}$ be the policy, and $z_{\phi}$ be the shaping weight function in the BiPaRS problem, repsectively. We assume that $P(s, a, s')$, $r(s,a)$, $p_0(s)$ are continuous w.r.t. the variables $s$, $a$, and $s'$, and $\pi_{\theta}(s,a)$ are continuous w.r.t. $s$, $a$, $\theta$, and $\phi$.
\end{assumption}
\begin{assumption} \label{Assumption:2}
For the MDP $\mathcal{M} = \langle \mathcal{S}, \mathcal{A}, P, r, p_0, \gamma \rangle$, there exists a real number $b$ such that $r(s,a) < b$, $\forall (s,a)$. 
\end{assumption}
\begin{theorem} \label{Theorem:UppLvlGradSto}
Given the shaping weight function $z_{\phi}$ and the stochastic policy $\pi_{\theta}$ of the agent in the upper level of the BiPaRS problem (Equation (2) in the paper), the gradient of the objective function $J(z_{\phi})$ with respect to the variable $\phi$ is 
\begin{equation} \label{Eq:UppLvlGradSto}
\nabla_{\phi} J(z_{\phi}) = \mathbb{E}_{s \sim \rho^{\pi}, a \sim \pi_{\theta}} \big[ \nabla_{\phi} \log \pi_{\theta}(s,a) Q^{\pi}(s,a) \big],
\end{equation}
where $Q^{\pi}$ is the state-action value function of $\pi_{\theta}$ in the original MDP.
\end{theorem}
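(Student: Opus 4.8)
The plan is to recognise that $J(z_{\phi})$ is nothing other than the ordinary return $J(\pi_{\theta}) = \mathbb{E}_{s \sim \rho^{\pi}, a \sim \pi_{\theta}}[r(s,a)]$ of the \emph{original} MDP $\mathcal{M}$, in which the reward function $r$, the transition kernel $P$, and the initial distribution $p_0$ carry no dependence on $\phi$; the only channel through which $\phi$ enters is the policy $\pi_{\theta}$, which by hypothesis is a differentiable implicit function of $\phi$. Hence, from the viewpoint of $\mathcal{M}$, the parameter $\phi$ plays exactly the role that the policy parameter plays in the statement of the policy gradient theorem \cite{sutton2000policy}, and the classical derivation of that theorem should carry over essentially verbatim with $\nabla_{\phi}$ in place of $\nabla_{\theta}$. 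First I would set up the value functions $V^{\pi}$ and $Q^{\pi}$ of $\pi_{\theta}$ in the original MDP via the Bellman relations $V^{\pi}(s) = \int_{\mathcal{A}} \pi_{\theta}(s,a) Q^{\pi}(s,a)\,\mathrm{d}a$ and $Q^{\pi}(s,a) = r(s,a) + \gamma \int_{\mathcal{S}} P(s,a,s') V^{\pi}(s')\,\mathrm{d}s'$.

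Next I would differentiate $V^{\pi}(s)$ with respect to $\phi$. The product rule applied to the first Bellman equation gives $\nabla_{\phi} V^{\pi}(s) = \int_{\mathcal{A}} \bigl[ \nabla_{\phi} \pi_{\theta}(s,a)\, Q^{\pi}(s,a) + \pi_{\theta}(s,a)\, \nabla_{\phi} Q^{\pi}(s,a) \bigr]\,\mathrm{d}a$, and since $r$ and $P$ are independent of $\phi$, the second Bellman equation yields $\nabla_{\phi} Q^{\pi}(s,a) = \gamma \int_{\mathcal{S}} P(s,a,s')\, \nabla_{\phi} V^{\pi}(s')\,\mathrm{d}s'$. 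Substituting this back produces a recursion in $\nabla_{\phi} V^{\pi}$; unrolling it and collecting the $t$-step visitation probabilities $p(s \rightarrow s', t, \pi_{\theta})$ gives $\nabla_{\phi} V^{\pi}(s) = \int_{\mathcal{S}} \sum_{t=0}^{\infty} \gamma^{t}\, p(s \rightarrow s', t, \pi_{\theta}) \int_{\mathcal{A}} \nabla_{\phi} \pi_{\theta}(s',a)\, Q^{\pi}(s',a)\,\mathrm{d}a\,\mathrm{d}s'$. Taking the expectation over $s \sim p_0$ and recognising the discounted state distribution $\rho^{\pi}$ yields $\nabla_{\phi} J(z_{\phi}) = \int_{\mathcal{S}} \rho^{\pi}(s) \int_{\mathcal{A}} \nabla_{\phi} \pi_{\theta}(s,a)\, Q^{\pi}(s,a)\,\mathrm{d}a\,\mathrm{d}s$, and the log-derivative identity $\nabla_{\phi} \pi_{\theta}(s,a) = \pi_{\theta}(s,a)\, \nabla_{\phi} \log \pi_{\theta}(s,a)$ rewrites this as the claimed expectation.

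The step that genuinely needs care is justifying the interchange of $\nabla_{\phi}$ with the integrals over $\mathcal{S}$ and $\mathcal{A}$ and with the infinite sum over $t$ that appears when the recursion is unrolled. This is precisely where Assumptions \ref{Assumption:1} and \ref{Assumption:2} enter: continuity of $P$, $r$, $p_0$, and $\pi_{\theta}$, together with differentiability of $\pi_{\theta}$ in $\phi$, legitimises differentiating under the integral sign, while the uniform bound $r(s,a) < b$ combined with $\gamma < 1$ makes the series $\sum_{t} \gamma^{t}(\cdot)$ absolutely convergent and dominated, so the term-by-term differentiation and the reordering of sum and integral are valid. I expect no conceptual obstacle beyond this measure-theoretic bookkeeping; the substance of the theorem is the observation that, relative to the original MDP, $\phi$ is merely another policy parameter, so the result is really a corollary of the standard policy gradient theorem rather than a new computation.
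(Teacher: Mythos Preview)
Your proposal is correct and follows essentially the same route as the paper: both differentiate the Bellman relation for $V^{\pi}$ with respect to $\phi$, exploit that $r$ and $P$ do not depend on $\phi$, unroll the resulting recursion in terms of $t$-step visitation probabilities, integrate against $p_0$ to produce $\rho^{\pi}$, and finish with the log-derivative trick, invoking Assumptions~\ref{Assumption:1} and~\ref{Assumption:2} to justify the interchanges. Your framing of the result as a direct corollary of the policy gradient theorem (with $\phi$ playing the role of the policy parameter in the original MDP) matches the paper's own remark that the proof is ``similar to that of the stochastic policy gradient theorem.''
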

\begin{proof}
Our proof is similar to that of the stochastic policy gradient theorem (Sutton et al. (1999)) and the deterministic policy gradient theorem (Silver et al. (2014)). Given a stochastic policy $\pi_{\theta}$, let $V^{\pi}$ and $Q^{\pi}$ denote the state value function and state-action value function of $\pi$ in the original MDP $\mathcal{M} = \langle \mathcal{S}, \mathcal{A}, P, r, p_0, \gamma \rangle$, respectively. Note that Assumption \ref{Assumption:1} implies that $V^{\pi}$ and $Q^{\pi}$ are continuous functions of $\phi$ and Assumption \ref{Assumption:2} guarantees that $V^{\pi}(s)$, $Q^{\pi}(s,a)$, $\nabla_{\phi} V^{\pi}(s)$ and $\nabla_{\phi} Q^{\pi}(s,a)$ are bounded for any $s \in \mathcal{S}$ and any $a \in \mathcal{A}$. In our proof, the two assumptions are necessary to exchange integrals and derivatives, and the integration orders.

Obviously, for any state $s$ we have 
\begin{displaymath}
V^{\pi}(s) = \int_{\mathcal{A}} \pi_{\theta}(s,a) Q^{\pi}(s, a) \text{d} a.
\end{displaymath}
Therefore, the gradient of $V^{\pi}(s)$ with respect to the shaping weight function parameter $\phi$ is 
\begin{displaymath}
\begin{split}
\nabla_{\phi} V^{\pi}(s) &= \nabla_{\phi} \Big( \int_{\mathcal{A}} \pi_{\theta}(s,a) Q^{\pi}(s, a) \text{d} a \Big) \\
&= \int_{\mathcal{A}} \Big( \nabla_{\phi} \pi_{\theta}(s,a) Q^{\pi}(s,a) + \pi_{\theta}(s,a) \nabla_{\phi} Q^{\pi}(s,a) \Big) \text{d} a.
\end{split}
\end{displaymath}
The above equation is an application of the Leibniz integral rule to exchange the orders of derivative and integral, which requires Assumption \ref{Assumption:1}.

By further expanding the term $\nabla_{\phi} Q^{\pi}(s,a)$, we can obtain
\begin{displaymath}
\begin{split}
\nabla_{\phi} V^{\pi}(s) &= \int_{\mathcal{A}} \Big( \nabla_{\phi} \pi_{\theta}(s,a) Q^{\pi}(s,a) + \pi_{\theta}(s,a) \nabla_{\phi} Q^{\pi}(s,a) \Big) \text{d} a \\
&= \int_{\mathcal{A}} \Big( \nabla_{\phi} \pi_{\theta}(s,a) Q^{\pi}(s,a) + \pi_{\theta}(s,a) \nabla_{\phi} \big( r(s,a) + \gamma \int_{\mathcal{S}} P(s,a,s') V^{\pi}(s') \text{d} s' \big)  \Big) \text{d} a \\
&= \int_{\mathcal{A}} \nabla_{\phi} \pi_{\theta}(s,a) Q^{\pi}(s,a) \text{d} a + \int_{\mathcal{A}} \pi_{\theta}(s,a) \int_{\mathcal{S}}  \gamma P(s,a,s') \nabla_{\phi} V^{\pi}(s') \text{d} s' \text{d} a  \\
&= \int_{\mathcal{A}} \nabla_{\phi} \pi_{\theta}(s,a) Q^{\pi}(s,a) \text{d} a + \int_{\mathcal{S}}  \int_{\mathcal{A}} \gamma \pi_{\theta}(s,a) P(s,a,s') \text{d} a \nabla_{\phi} V^{\pi}(s') \text{d} s' \\
&= \int_{\mathcal{A}} \nabla_{\phi} \pi_{\theta}(s,a) Q^{\pi}(s,a) \text{d} a + \int_{\mathcal{S}} \gamma p(s \rightarrow s', 1, \pi_{\theta}) \nabla_{\phi} V^{\pi}(s') \text{d} s',
\end{split}
\end{displaymath}
where $p(s' \rightarrow s, t, \pi_{\theta})$ is the probability that state $s$ is visited after $t$ steps from state $s'$ under the policy $\pi_{\theta}$. In the above derivation, the first step is an expansion of the Bellman equation. The second step is by exchanging the order of derivative and integral. The third step exchanges the order of integration by using Fubini's Theorem, which requires our assumptions. The last step is according to the definition of $p$. By expanding $\nabla_{\phi} V^{\pi}(s')$ in the same way, we can get 
\begin{displaymath}
\begin{split}
\nabla_{\phi} V^{\pi}(s) &= \int_{\mathcal{A}} \nabla_{\phi} \pi_{\theta}(s,a) Q^{\pi}(s,a) \text{d} a + \int_{\mathcal{S}} \gamma p(s \rightarrow s', 1, \pi_{\theta}) \nabla_{\phi} \Big( \int_{\mathcal{A}} \pi_{\theta}(s',a') Q^{\pi}(s',a') \text{d} a' \Big) \text{d} s' \\
&= \int_{\mathcal{A}} \nabla_{\phi} \pi_{\theta}(s,a) Q^{\pi}(s,a) \text{d} a + \int_{\mathcal{S}} \gamma p(s \rightarrow s', 1, \pi_{\theta}) \int_{\mathcal{A}} \nabla_{\phi}  \pi_{\theta}(s',a') Q^{\pi}(s',a') \text{d} a' \text{d} s' \\
&\qquad + \int_{\mathcal{S}} \gamma p(s \rightarrow s', 1, \pi_{\theta}) \int_{\mathcal{A}} \pi_{\theta}(s',a') \nabla_{\phi} Q^{\pi}(s',a') \text{d} a' \text{d} s' \\
&= \int_{\mathcal{A}} \nabla_{\phi} \pi_{\theta}(s,a) Q^{\pi}(s,a) \text{d} a + \int_{\mathcal{S}} \gamma p(s \rightarrow s', 1, \pi_{\theta}) \int_{\mathcal{A}} \nabla_{\phi}  \pi_{\theta}(s',a') Q^{\pi}(s',a') \text{d} a' \text{d} s' \\
&\qquad + \int_{\mathcal{S}} \gamma p(s \rightarrow s', 1, \pi_{\theta}) \int_{\mathcal{S}} \gamma p(s' \rightarrow s'', 1, \pi_{\theta}) \nabla_{\phi} V^{\pi}(s'') \text{d} s'' \text{d} s' \\
&= \int_{\mathcal{A}} \nabla_{\phi} \pi_{\theta}(s,a) Q^{\pi}(s,a) \text{d} a + \int_{\mathcal{S}} \gamma p(s \rightarrow s', 1, \pi_{\theta}) \int_{\mathcal{A}} \nabla_{\phi}  \pi_{\theta}(s',a') Q^{\pi}(s',a') \text{d} a' \text{d} s' \\
&\qquad + \int_{\mathcal{S}} \gamma p(s \rightarrow s', 2, \pi_{\theta}) \nabla_{\phi} V^{\pi}(s') \text{d} s' \\
&= \cdot\cdot\cdot\\
&= \sum_{t=0}^{\infty} \int_{\mathcal{S}} \gamma^t p(s \rightarrow s', t, \pi_{\theta}) \int_{\mathcal{A}} \nabla_{\phi} \pi_{\theta}(s', a) Q^{\pi}(s', a) \text{d} a \text{d} s' \\
&= \int_{\mathcal{S}} \sum_{t=0}^{\infty} \gamma^t p(s \rightarrow s', t, \pi_{\theta}) \int_{\mathcal{A}} \nabla_{\phi} \pi_{\theta}(s', a) Q^{\pi}(s', a) \text{d} a \text{d} s'. 
\end{split}
\end{displaymath}
Recall that the upper-level objective $J(z_{\phi}) = \int_{\mathcal{S}} \rho^{\pi}(s) \int_{\mathcal{A}} r(s,a) \text{d} a \text{d} s = \int_{\mathcal{S}} p_0(s) V^{\pi}(s) \text{d} s$. Therefore, we have 
\begin{displaymath}
\begin{split}
\nabla_{\phi} J(z_{\phi}) &= \int_{\mathcal{S}} p_0(s) \nabla_{\phi} V^{\pi}(s) \text{d} s \\
&= \int_{\mathcal{S}} p_0(s) \int_{\mathcal{S}} \sum_{t=0}^{\infty} \gamma^t p(s \rightarrow s', t, \pi_{\theta}) \int_{\mathcal{A}} \nabla_{\phi} \pi_{\theta}(s', a) Q^{\pi}(s', a) \text{d} a \text{d} s' \text{d} s \\ 
&= \int_{\mathcal{S}} p_0(s) \int_{\mathcal{S}} \sum_{t=0}^{\infty} \gamma^t p(s \rightarrow s', t, \pi_{\theta}) \text{d} s \int_{\mathcal{A}} \nabla_{\phi} \pi_{\theta}(s', a) Q^{\pi}(s', a) \text{d} a \text{d} s'\\
&= \int_{\mathcal{S}} \rho^{\pi}(s') \int_{\mathcal{A}} \nabla_{\phi} \pi_{\theta}(s', a) Q^{\pi}(s', a) \text{d} a \text{d} s'.
\end{split}
\end{displaymath}
By using the log-derivative trick, we can finally obtain Equation (\ref{Eq:UppLvlGradSto}).
\end{proof}

\renewcommand{\theequation}{C.\arabic{equation}}
\newpage
\section{BiPaRS for Deterministic Policy Setting}
In this section, we define the BiPaRS problem for deterministic policy gradient algorithms. We provide a theorem similar to Theorem \ref{Theorem:UppLvlGradSto} and give the proof. Then we show the three methods explicit mapping (EM), meta-gradient learning (MGL), and incremental meta-gradient learning (IMGL) for approximating the gradient of the expected true reward with respect to the shaping weight function parameter $\phi$ in the deterministic policy setting.

Let $\mathcal{M} = \langle \mathcal{S}, \mathcal{A}, P, r, p_0, \gamma \rangle$ denote an MDP, $\mu_{\theta}$ denote an agent's deterministic policy with parameter $\theta$, $f$ denote the shaping reward function, and $z_{\phi}$ denote the shaping weight function with parameter $\phi$. Given $f$ and $z_{\phi}$, the agent should maximize the expected modified reward $\tilde{J}(\mu_{\theta}) = \mathbb{E}_{s \sim \rho^{\mu}} \big[ \big( r(s,a) + z_{\phi}(s,a) f(s,a) \big) \big|_{a = \mu_{\theta}(s)} \big]$. The objective of the shaping weight function $z_{\phi}$ is the expected accumulative true reward $J(z_{\phi}) = \mathbb{E}_{s \sim \rho^{\mu}} \big[ r(s,a) \big|_{a = \mu_{\theta}(s)} \big]$. Formally, the bi-level optimization of parameterized reward shaping (BiPaRS) problem for the deterministic policy setting can be defined as
\begin{equation} \label{Eq:BiPaRS_Det}
\begin{split}
&\max_{\phi} \text{ } \mathbb{E}_{s \sim \rho^{\mu}} \big[ r(s, a) \big|_{a=\mu_{\theta}(s)} \big] \\
&\quad\text{s.t. } \phi \in \Phi \\
&\qquad \theta = \argmax_{\theta^{\prime}} \text{ } \mathbb{E}_{s \sim \rho^{\mu}} \big[ \big( r(s,a) + z_{\phi}(s,a) f(s,a) \big) \big|_{a = \mu_{\theta^{\prime}}(s)} \big] \\
&\qquad\quad \text{ s.t. } \theta^{\prime} \in \Theta.
\end{split}
\end{equation}

The following theorem shows how to compute the gradient of the upper-level objective $J(z_{\phi})$ with respect to the variable $\phi$ in Equation (\ref{Eq:BiPaRS_Det}).
\begin{theorem} \label{Theorem:UppLvlGradDet}
Given the shaping weight function $z_{\phi}$ and the deterministic policy $\mu_{\theta}$ of the agent in the upper level of the BiPaRS problem Equation (\ref{Eq:BiPaRS_Det}), the gradient of the objective function $J(z_{\phi})$ with respect to the shaping weight function parameter $\phi$ is 
\begin{equation} \label{Eq:UppLvlGradDet}
\nabla_{\phi} J(z_{\phi}) = \mathbb{E}_{s \sim \rho^{\mu}} \big[ \nabla_{\phi} \mu_{\theta}(s) \nabla_{a} Q^{\mu}(s,a) \big|_{a=\mu_{\theta}(s)} \big],
\end{equation}
where $Q^{\mu}$ is the state-action value function of $\mu_{\theta}$ in the original MDP.
\end{theorem}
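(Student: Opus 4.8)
The plan is to mirror the proof of Theorem \ref{Theorem:UppLvlGradSto}, but replacing the stochastic policy gradient recursion with the deterministic one of Silver et al. (2014) and treating $\mu_{\theta}$ as an implicit function of $\phi$ throughout. First I would impose regularity conditions analogous to Assumptions \ref{Assumption:1} and \ref{Assumption:2}, now stated for deterministic policies: $P(s,a,s')$, $r(s,a)$, $p_0(s)$, $\mu_{\theta}(s)$, $\nabla_{\theta}\mu_{\theta}(s)$ and $\nabla_{a}Q^{\mu}(s,a)$ are continuous and $r$ is bounded. These are exactly what is needed to exchange $\nabla_{\phi}$ with integrals (Leibniz rule) and to swap integration orders (Fubini's theorem) in the steps below. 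As after Theorem \ref{Theorem:UppLvlGrad}, the existence of $\nabla_{\phi}\mu_{\theta}$ is justified by observing that in the upper level $\mu_{\theta}$ is the direct outcome of applying $z_{\phi}$ in the lower-level optimization, hence an implicit function of $\phi$.

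Starting from $V^{\mu}(s) = Q^{\mu}(s,\mu_{\theta}(s))$ for the \emph{true-reward} value functions, I would differentiate with respect to $\phi$. The chain rule produces the ``direct'' term $\nabla_{\phi}\mu_{\theta}(s)\,\nabla_{a}Q^{\mu}(s,a)\big|_{a=\mu_{\theta}(s)}$ plus a term $\nabla_{\phi}Q^{\mu}(s,a)\big|_{a=\mu_{\theta}(s)}$ in which the action argument is held fixed. Expanding the Bellman equation $Q^{\mu}(s,a) = r(s,a) + \gamma\int_{\mathcal{S}}P(s,a,s')V^{\mu}(s')\,\text{d}s'$ and noting that neither $r$ nor $P$ depends explicitly on $\phi$, the held-fixed term reduces to $\gamma\int_{\mathcal{S}}P(s,\mu_{\theta}(s),s')\nabla_{\phi}V^{\mu}(s')\,\text{d}s'$. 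Writing $p(s\rightarrow s',1,\mu_{\theta}) = P(s,\mu_{\theta}(s),s')$ gives the one-step recursion
\begin{displaymath}
\nabla_{\phi}V^{\mu}(s) = \nabla_{\phi}\mu_{\theta}(s)\,\nabla_{a}Q^{\mu}(s,a)\big|_{a=\mu_{\theta}(s)} + \gamma\int_{\mathcal{S}}p(s\rightarrow s',1,\mu_{\theta})\,\nabla_{\phi}V^{\mu}(s')\,\text{d}s'.
\end{displaymath}

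Next I would unroll this recursion exactly as in the proof of Theorem \ref{Theorem:UppLvlGradSto}, applying Fubini's theorem at each step to compose the transition kernels into $p(s\rightarrow s',t,\mu_{\theta})$, obtaining
\begin{displaymath}
\nabla_{\phi}V^{\mu}(s) = \int_{\mathcal{S}}\sum_{t=0}^{\infty}\gamma^{t}p(s\rightarrow s',t,\mu_{\theta})\,\nabla_{\phi}\mu_{\theta}(s')\,\nabla_{a}Q^{\mu}(s',a)\big|_{a=\mu_{\theta}(s')}\,\text{d}s'.
\end{displaymath}
Finally, using $J(z_{\phi}) = \int_{\mathcal{S}}p_0(s)V^{\mu}(s)\,\text{d}s$ and the definition of the discounted state distribution $\rho^{\mu}$, I would integrate the above against $p_0$ to conclude
\begin{displaymath}
\nabla_{\phi}J(z_{\phi}) = \int_{\mathcal{S}}\rho^{\mu}(s')\,\nabla_{\phi}\mu_{\theta}(s')\,\nabla_{a}Q^{\mu}(s',a)\big|_{a=\mu_{\theta}(s')}\,\text{d}s',
\end{displaymath}
which is Equation (\ref{Eq:UppLvlGradDet}). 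I expect the main obstacle to be the analytic bookkeeping that legitimizes the interchanges of differentiation, infinite summation and integration --- the same subtlety flagged in the stochastic proof --- now under the stronger deterministic-policy regularity conditions, plus checking these conditions are compatible with $\mu_{\theta}$ being only implicitly defined through the lower-level $\argmax$. Conceptually the key simplification, exactly as in Theorem \ref{Theorem:UppLvlGrad}, is that the lower-level objective $\tilde{J}$ never enters the recursion: $\phi$ affects $J(z_{\phi})$ solely through $\mu_{\theta}$.
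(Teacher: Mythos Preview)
Your proposal is correct and follows essentially the same route as the paper's proof: both start from $V^{\mu}(s)=Q^{\mu}(s,\mu_{\theta}(s))$, derive the one-step recursion $\nabla_{\phi}V^{\mu}(s)=\nabla_{\phi}\mu_{\theta}(s)\,\nabla_{a}Q^{\mu}(s,a)\big|_{a=\mu_{\theta}(s)}+\gamma\int_{\mathcal{S}}p(s\rightarrow s',1,\mu_{\theta})\nabla_{\phi}V^{\mu}(s')\,\text{d}s'$, unroll it, and integrate against $p_0$. The only cosmetic differences are that the paper expands the Bellman equation first and then differentiates (so its regularity conditions are stated on the primitives $\nabla_{a}P$ and $\nabla_{a}r$ rather than on $\nabla_{a}Q^{\mu}$), whereas you apply the chain rule to $Q^{\mu}(s,\mu_{\theta}(s))$ directly and then expand the held-fixed term; the resulting recursion and the remainder of the argument are identical.
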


To prove the Theorem, we assume that $P(s, a, s')$, $\nabla_{a} P(s, a, s')$, $r(s,a)$, $\nabla_{a} r(s,a)$, $p_0(s)$ are continuous w.r.t. the variables $s$, $a$, and $s'$, and $\mu_{\theta}(s)$ are continuous w.r.t. $s$, $\theta$, and $\phi$. We also assume that the rewards are bounded. The proof is as follows.
\begin{proof}
Given a deterministic policy $\mu_{\theta}$, for any state $s$, the gradient of the state value $V^{\mu}(s)$ with respect to $\phi$ is 
\begin{displaymath}
\begin{split}
\nabla_{\phi} V^{\mu}(s) &= \nabla_{\phi} Q^{\mu}(s,\mu_{\theta}(s))  \\
&= \nabla_{\phi} \Big( r(s,\mu_{\theta}(s)) + \gamma \int_{\mathcal{S}} P(s,\mu_{\theta}(s),s') V^{\mu}(s') \text{d} s' \Big) \\
&= \nabla_{\phi} \mu_{\theta}(s) \nabla_{a} r(s,a)|_{a=\mu_{\theta}(s)} + \nabla_{\phi} \Big( \gamma \int_{\mathcal{S}} P(s,\mu_{\theta}(s),s') V^{\mu}(s') \text{d} s' \Big) \\
&= \nabla_{\phi} \mu_{\theta}(s) \nabla_{a} r(s,a)|_{a=\mu_{\theta}(s)} + \gamma \int_{\mathcal{S}} \nabla_{\phi} \mu_{\theta}(s) \nabla_{a} P(s, a, s')|_{a=\mu_{\theta}(s)} V^{\mu}(s') \text{d} s' \\
&\qquad + \gamma \int_{\mathcal{S}} P(s, \mu_{\theta}(s), s') \nabla_{\phi} V^{\mu}(s') \text{d} s' \\
&= \nabla_{\phi} \mu_{\theta}(s) \nabla_{a} Q^{\mu}(s,a)|_{a = \mu_{\theta}(s)} + \int_{\mathcal{S}} \gamma p(s \rightarrow s', 1, \mu_{\theta}) \nabla_{\phi} V^{\mu}(s') \text{d} s'.
\end{split}
\end{displaymath}
By expanding $\nabla_{\phi} V^{\mu}(s')$, we have 
\begin{displaymath}
\begin{split}
\nabla_{\phi} V^{\mu}(s) &= \nabla_{\phi} \mu_{\theta}(s) \nabla_{a} Q^{\mu}(s,a)|_{a = \mu_{\theta}(s)} + \int_{\mathcal{S}} \gamma p(s \rightarrow s', 1, \mu_{\theta}) \nabla_{\phi} Q^{\mu}(s', \mu_{\theta}(s')) \text{d} s' \\
&= \nabla_{\phi} \mu_{\theta}(s) \nabla_{a} Q^{\mu}(s,a)|_{a = \mu_{\theta}(s)} + \int_{\mathcal{S}} \gamma p(s \rightarrow s', 1, \mu_{\theta}) \nabla_{\phi} \mu_{\theta}(s') \nabla_{a} Q^{\mu}(s',a)|_{a = \mu_{\theta}(s')} \text{d} s' \\
&\qquad + \int_{\mathcal{S}} \gamma p(s \rightarrow s', 1, \mu_{\theta}) \int_{\mathcal{S}} \gamma p(s' \rightarrow s'', 1, \mu_{\theta}) \nabla_{\phi} V^{\mu}(s'') \text{d} s'' \text{d} s' \\
&= \nabla_{\phi} \mu_{\theta}(s) \nabla_{a} Q^{\mu}(s,a)|_{a = \mu_{\theta}(s)} \\
&\qquad + \int_{\mathcal{S}} \gamma p(s \rightarrow s', 1, \mu_{\theta}) \nabla_{\phi} \mu_{\theta}(s') \nabla_{a} Q^{\mu}(s',a)|_{a = \mu_{\theta}(s')} \text{d} s' \\
&\qquad + \int_{\mathcal{S}} \gamma^2 p(s \rightarrow s', 2, \mu_{\theta}) \nabla_{\phi} \mu_{\theta}(s') \nabla_{a} Q^{\mu}(s',a)|_{a = \mu_{\theta}(s')} \text{d} s' \\
&\qquad + \cdot\cdot\cdot \\
&= \int_{\mathcal{S}} \sum_{t=0}^{\infty} \gamma^t p(s \rightarrow s', t, \mu_{\theta}) \nabla_{\phi} \mu_{\theta}(s') \nabla_{a} Q^{\mu}(s',a)|_{a = \mu_{\theta}(s')} \text{d} s'.
\end{split}
\end{displaymath}
Taking expectation over the initial states, we can obtain
\begin{displaymath}
\begin{split}
\nabla_{\phi} J(z_{\phi}) &= \int_{\mathcal{S}} p_0(s) \nabla_{\phi} V^{\mu}(s) \text{d} s \\
&= \int_{\mathcal{S}} p_0(s) \int_{\mathcal{S}} \sum_{t=0}^{\infty} \gamma^t p(s \rightarrow s', t, \mu_{\theta}) \nabla_{\phi} \mu_{\theta}(s') \nabla_{a} Q^{\mu}(s',a)|_{a = \mu_{\theta}(s')} \text{d} s' \text{d} s \\
&= \int_{\mathcal{S}} p_0(s) \int_{\mathcal{S}} \sum_{t=0}^{\infty} \gamma^t p(s \rightarrow s', t, \mu_{\theta}) \text{d} s \nabla_{\phi} \mu_{\theta}(s') \nabla_{a} Q^{\mu}(s',a)|_{a = \mu_{\theta}(s')} \text{d} s' \\
&= \int_{\mathcal{S}} \rho^{\mu}(s) \nabla_{\phi} \mu_{\theta}(s) \nabla_{a} Q^{\mu}(s, a)|_{a = \mu_{\theta}(s)} \text{d} s,
\end{split}
\end{displaymath}
which is exactly Equation (\ref{Eq:UppLvlGradDet}).
\end{proof}

\subsection{Explicit Mapping}
The explicit mapping method makes the shaping weight function $z_{\phi}$ an input of the policy $\mu_{\theta}$. Specifically, the policy $\mu_{\theta}$ is redefined as a hyper policy $\mu_{\theta}: \mathcal{S}_{z} \rightarrow \mathcal{A}$, where $\mathcal{S}_{z} = \{ (s, z_{\phi}(s)) | \forall s \in \mathcal{S} \}$. According to the chain rule, we have 
\begin{equation}
\begin{split}
\nabla_{\phi} J(z_{\phi}) &= \mathbb{E}_{s \sim \rho^{\mu}} \big[ \nabla_{\phi} \mu_{\theta}(s, z) \nabla_{a} Q^{\mu}(s,a) \big|_{a = \mu_{\theta}(s, z), z = z_{\phi}(s)} \big] \\
&= \mathbb{E}_{s \sim \rho^{\mu}} \big[ \nabla_{z} \mu_{\theta}(s, z) \nabla_{\phi} z_{\phi}(s) \nabla_{a} Q^{\mu}(s,a) \big|_{a = \mu_{\theta}(s, z), z = z_{\phi}(s)} \big].
\end{split}
\end{equation}

\subsection{Meta-Gradient Learning}
Let $\theta$ and ${\theta}^{\prime}$ be the policy parameters before and after one round of low-level optimization, respectively. Let $\tilde{Q}$ denote the state-action value function under the policy $\mu_{\theta}$ in the modified MDP $\mathcal{M}^{\prime} = \langle \mathcal{S}, \mathcal{A}, P, \tilde{r}, p_0, \gamma \rangle $. We still assume that a batch of $N$ ($N > 0$) samples $\mathcal{B} = \{ (s_i, a_i) | i = 1, ..., N \}$ is used to update $\theta$. According to the deterministic policy gradient theorem, we have
\begin{equation} \label{Eq:ThetaToThetaPrimeDet}
{\theta}^{\prime} = \theta + \alpha \sum_{i=1}^{N} \nabla_{\theta} \mu_{\theta}(s_i) \nabla_{a} \tilde{Q}(s_i, a)|_{a = \mu_{\theta}(s_i)},
\end{equation}
where $\alpha$ is the learning rate. By taking the gradient of the both sides of Equation (\ref{Eq:ThetaToThetaPrimeDet}) with respect to $\phi$, we get 
\begin{equation} \label{Eq:ApproxGradThetaPrimePhiDet}
\begin{split}
\nabla_{\phi} {\theta}^{\prime} &= \nabla_{\phi} \big( \theta + \alpha \sum_{i=1}^{N} \nabla_{\theta} \mu_{\theta}(s_i) \nabla_{a} \tilde{Q}(s_i, a)|_{a = \mu_{\theta}(s_i)} \big) \\ 
&\approx \alpha \sum_{i=1}^{N} \nabla_{\theta} \mu_{\theta}(s_i)^{\top} \nabla_{\phi} \big( \nabla_{a} \tilde{Q}(s_i, a)|_{a = \mu_{\theta}(s_i)} \big),
\end{split}
\end{equation}
where $\theta$ is treated as a constant with respect to $\phi$. However, for each sample $i$ in the batch $\mathcal{B}$, we cannot directly compute the value of the term $\nabla_{\phi} \big( \nabla_{a} \tilde{Q}(s_i, a)|_{a = \mu_{\theta}(s_i)} \big)$ even we replace $\tilde{Q}(s_i, a)$ by a Monte Carlo return as in the stochastic policy case. We adopt the idea of the explicit mapping method to solve this problem. That is, to include $z_{\phi}(s)$ as an input of $\tilde{Q}$. As we discuss in Section 4.1, this makes $\tilde{Q}$ the state-action value function of $\mu_{\theta}$ in an equivalent MDP $\tilde{\mathcal{M}}_{z} = \langle \mathcal{S}_{z}, \mathcal{A}, P_{z}, \tilde{r}_{z}, p_{z}, \gamma \rangle$ and is transparent to the agent. For simplicity, we denote $\delta(s_i, a, z) = \nabla_{a} \tilde{Q}(s_i, a, z)$. With the extended state-action value function, we have 
\begin{equation} \label{Eq:ApproxApproxGradThetaPrimePhiDet}
\begin{split}
\nabla_{\phi} {\theta}^{\prime} &\approx \alpha \sum_{i=1}^{N} \nabla_{\theta} \mu_{\theta}(s_i)^{\top} \nabla_{\phi} \big( \nabla_{a} \tilde{Q}(s_i, a, z)|_{a = \mu_{\theta}(s_i), z = z_{\phi}(s_i)} \big) \\
&= \alpha \sum_{i=1}^{N} \nabla_{\theta} \mu_{\theta}(s_i)^{\top} \nabla_{\phi} \delta(s_i, a, z)|_{a = \mu_{\theta}(s_i), z = z_{\phi}(s_i)} \\
&= \alpha \sum_{i=1}^{N} \nabla_{\theta} \mu_{\theta}(s_i)^{\top} \nabla_{z} \delta(s_i, a, z)|_{a = \mu_{\theta}(s_i), z = z_{\phi}(s_i)} \nabla_{\phi} z_{\phi}(s_i).
\end{split}
\end{equation}

\subsection{Incremental Meta-Gradient Learning}
In Equation (\ref{Eq:ApproxGradThetaPrimePhiDet}), we can also treat $\theta$ as a non-constant with respect to $\phi$ because $\phi$ is optimized according to the old policy $\pi_{\theta}$ in the last round of upper-level optimization. For the simplification purpose, we let $g_{\theta}(s) = \nabla_{\theta} \mu_{\theta}(s)$. Then Equation (\ref{Eq:ApproxGradThetaPrimePhiDet}) can be rewritten as 
\begin{equation} \label{Eq:ApproxGradThetaPrimePhiIMGL}
\begin{split}
\nabla_{\phi} {\theta}^{\prime} &= \nabla_{\phi} \big( \theta + \alpha \sum_{i=1}^{N} \nabla_{\theta} \mu_{\theta}(s_i) \nabla_{a} \tilde{Q}(s_i, a)|_{a = \mu_{\theta}(s_i)} \big) \\
&= \nabla_{\phi} \theta + \alpha \sum_{i=1}^{N} \Big( \nabla_{\phi} g_{\theta}(s_i)^{\top} \nabla_{a} \tilde{Q}(s_i, a) + g_{\theta}(s_i)^{\top} \nabla_{\phi} \big( \nabla_{a} \tilde{Q}(s_i, a) \big) \Big)\Big|_{a = \mu_{\theta}(s_i)}
\end{split}
\end{equation}
For computing the value of $\nabla_{\phi} \big( \nabla_{a} \tilde{Q}(s_i, a)|_{a = \mu_{\theta}(s_i)} \big)$, once again we can include $z_{\phi}$ in the input of $\tilde{Q}$. Denoting $\nabla_{a} \tilde{Q}(s_i, a, z)$ by $\delta(s_i, a, z)$, we have
\begin{equation} \label{Eq:ApproxGradThetaPrimePhiAccurate}
\begin{split}
\nabla_{\phi} {\theta}^{\prime} &\approx \nabla_{\phi} \theta + \alpha \sum_{i=1}^{N} \Big( \nabla_{\theta} g_{\theta}(s_i)^{\top} \nabla_{\phi} \theta \text{ } \delta(s_i, a, z) + g_{\theta}(s_i)^{\top} \nabla_{\phi} \delta(s_i, a, z) \Big) \Big|_{a = \mu_{\theta}(s_i), z = z_{\phi}(s_i)}
\end{split}
\end{equation}
By substituting Equation (\ref{Eq:ApproxGradThetaPrimePhiAccurate}) into Equation (\ref{Eq:UppLvlGradDet}), we can get the third approximation of the gradient $\nabla_{\phi} J(z_{\phi})$. In fact, in Equation (\ref{Eq:ApproxGradThetaPrimePhiIMGL}) we can treat the shaping state-action value $\tilde{Q}$ as a constant with respect to $\phi$ so that we do not have to extend the input space of $\tilde{Q}$ and can get another version of $\nabla_{\phi} {\theta}^{\prime}$, namely 
\begin{equation}
\nabla_{\phi} {\theta}^{\prime} \approx \nabla_{\phi} \theta + \alpha \sum_{i=1}^{N} \nabla_{\theta} g_{\theta}(s_i)^{\top} \nabla_{\phi} \theta \text{ } \nabla_{a} \tilde{Q}(s_i, a)|_{a = \mu_{\theta}(s_i)}.
\end{equation}
Furthermore, we can also remove the second-order term $\alpha \sum_{i=1}^{N} \nabla_{\theta} g_{\theta}(s_i)^{\top} \nabla_{\phi} \theta \text{ } \delta(s_i, a, z)$ of Equation (\ref{Eq:ApproxGradThetaPrimePhiAccurate}) and get a computationally cheaper approximation
\begin{equation}
\nabla_{\phi} {\theta}^{\prime} \approx \nabla_{\phi} \theta + \alpha \sum_{i=1}^{N} \Big( g_{\theta}(s_i)^{\top} \nabla_{z} \delta(s_i, a, z) \nabla_{\phi} z_{\phi}(s_i) \Big)\Big|_{a = \mu_{\theta}(s_i), z = z_{\phi}(s_i)}.
\end{equation}

\newpage
\section{Experiments}
In this section, we provide the details of the problem and algorithm hyperparameter settings of the \textit{cartpole} and \textit{MuJoCo} experiments.

\subsection{Cartpole}
\textbf{Problem Setting:} We choose the \textit{cartpole} task from the OpenAI Gym-v1 benchmark. The cartpole system consists of a pole and a cart. The pole is connected to the cart by an un-actuated joint and the cart can be controlled by the agent to move along the horizontal axis. Each episode starts by setting the position of the cart randomly within the interval $[-0.05, 0.05]$ and setting the angle between the pole and the vertical direction smaller than $3$ degrees. In each step of an episode, the agent should apply a positive or negative force to the cart to let the pole remain within $12$ degrees from the vertical direction and keep the position of the cart within $[-2.4, 2.4]$. An episode will be terminated if either of the two conditions is broken or the episode has lasted for $200$ steps. In the discrete-action cartpole, the action space of the agent has only two actions $+1$ and $-1$, while in the continuous-action cartpole, the agent has to decide the specific value of the force to be applied.

\textbf{Hyperparameter Settings:} In the \textit{cartpole} experiment, the base learner PPO adopts a two-layer policy network with $8$ units in each layer and a two-layer value function network with $32$ units in each hidden layer. Both the policy and value function networks adopt \textit{relu} as the activation function in each hidden layer. The policy and value function networks are updated every $20,000$ steps and one such update contains $50$ optimizing epochs with batch size $1024$. The threshold for clipping the probability ratio is $0.5$. We adopt the generalized advantage estimator (GAE) to replace Q-function for computing policy gradient and the hyperparameter $\lambda$ for bias-variance trade-off is $0.95$. 

The DPBA method adopts a neural network to learn potentials from shaping rewards, which has two full-connected (FC) \textit{tanh} hidden layers with $16$ units in the first layer and $8$ units in the second layer. The BiPaRS methods also use two-layer neural network to represent the shaping weight function, with $16$ and $8$ units in the first and second layers and \textit{tanh} as the activation in both layers. The outputs of the shaping weight function network for all state-action pairs are initialized to $1$ using the following way. Firstly, the weights and biases of the hidden layers are initialized from a uniform distribution in $[-0.125, 0.125]$. Secondly, the weights and bias of the output layer are initialized randomly uniformly in $[-10^{-3}, 10^{-3}]$. The two steps make sure that the outputs are near zero. Finally, we add $1$ to the output layer so that the initial shaping weights of all state-action pairs are near $1$.

All these networks are optimized using the Adam optimizer. The learning rates for optimizing the policy and the value function networks are $10^{-4}$ and $2 \times 10^{-4}$, respectively. The learning rate for updating the potential network of the DPBA method is $5 \times 10^{-4}$. The learning rate for optimizing the shaping weight function of the BiPaRS methods is set differently in different tests. In the tests with the original shaping reward function, the learning rate is $10^{-5}$, while in the tests with the harmful shaping reward function (i.e., the first adaptability test) and the random shaping reward function (i.e., the third adaptability test), it is $5 \times 10^{-4}$. Recall that both the harmful and random shaping reward functions make it difficult to learn a good policy. We set the learning rate higher in the two tests to make the shaping weights change rapidly from the initial value $1.0$. The discount rate $\gamma$ is $0.999$ in all tests. We summarize the above hyperparameter settings in Table \ref{Tab:HyperParamCartpole}. Note that the naive shaping (NS) method directly adds the shaping reward to the original reward function, so it has no other hyperparameters.

\begin{table}
  \centering
  \begin{small}
  \begin{tabular}{c*{1}{l}}
    \toprule
    {Algorithm} & Hyperparameters \\ \hline
    Base Learner (PPO) & \tabincell{l}{policy network: two $8$-unit FC hidden layers,\\ { }{ }{ }{ }{ }{ }{ }{ }{ }{ }{ }{ }{ }{ }{ }{ }{ }{ }{ }{ }{ }{ }{ }{ }{ } \textit{relu} activation \\ value network: two $32$-unit FC hidden layers, \\ { }{ }{ }{ }{ }{ }{ }{ }{ }{ }{ }{ }{ }{ }{ }{ }{ }{ }{ }{ }{ }{ }{ }{ }{ } \textit{relu} activation \\ threshold of probability ratio clipping ($\epsilon$): $0.5$ \\ update period: $20,000$ steps \\ number of epoches per update: $50$ \\ batch size: $1024$ \\ GAE parameter ($\lambda$): $0.95$ \\ optimizer: Adam \\ learning rate of policy network: $10^{-4}$ \\ learning rate of value network: $2 \times 10^{-4}$ \\ discount rate ($\gamma$): $0.999$ } \\ \hline
    DPBA & \tabincell{l}{potential network: $16$-unit FC layer + $8$-unit FC layer, \\ { }{ }{ }{ }{ }{ }{ }{ }{ }{ }{ }{ }{ }{ }{ }{ }{ }{ }{ }{ }{ }{ }{ }{ }{ }{ }{ }{ }{ } \textit{tanh} activation \\ optimizer: Adam \\ learning rate of potential network: $5 \times 10^{-4}$ } \\ \hline
    BiPaRS & \tabincell{l}{shaping weight network: $16$-unit FC layer + $8$-unit FC layer, \\ { }{ }{ }{ }{ }{ }{ }{ }{ }{ }{ }{ }{ }{ }{ }{ }{ }{ }{ }{ }{ }{ }{ }{ }{ }{ }{ }{ }{ }{ }{ }{ }{ }{ }{ }{ }{ }{ }{ }{ }  \textit{tanh} activation \\ optimizer: Adam \\ learning rate of shaping weight network: $10^{-5}$ in the original test, \\and $5 \times 10^{-4}$ in the adaptability tests \\ initial shaping weight: $1.0$ \\ shaping weight range: $(-\infty, +\infty)$} \\
    \bottomrule
  \end{tabular}
  \end{small}
  \caption{The hyperparameters of the tested algorithms in the \textit{cartpole} experiment \label{Tab:HyperParamCartpole}}
\end{table}

\subsection{MuJoCo}
Recall that in the \textit{MuJoCo} experiment, we adopt the shaping reward function $f(s,a) = w (0.25 - \frac{1}{L} \sum_{i=1}^{L} |a_i|)$ to limit the average torque amount of the agent's action, where $w$ is a task-specific weight which makes $f(s,a)$ have the same scale as the true reward in the initial learning phase. Now we show the value of $w$ in each task in Table \ref{Tab:WeightMuJoCo}.
 
Since the MuJoCo tasks are more complicated than the cartpole task, the policy and value function networks of the base learner PPO have three hidden layers with $64$ units and \textit{relu} activation in each layer. The RCPO algorithm also adopts PPO as the base learner and uses the same network architectures. The potential network of DPBA has two \textit{tanh} hidden layers and each layer also has $64$ units. The shaping weight function network of our BiPaRS methods has two \textit{tanh} hidden layers, with $16$ and $8$ units in the first and second layers, respectively.

All the tested algorithms perform model update every $20,000$ training steps. One updating process contains $50$ epochs and each updating epoch uses a batch of $1024$ samples. The threshold for clipping the probability ratio is $0.2$ in all of the five MuJoCo tasks. The discount rate $\gamma$ is $0.999$ and the parameter $\lambda$ of GAE is $0.95$. The Lagrange multiplier of RCPO is bounded in $[0, 10000]$. The neural network models of all algorithms are optimized by the Adam optimizer. The learning rates for optimizing the policy and value function networks are $10^{-4}$ and $2 \times 10^{-4}$, respectively. The learning rates for updating the potential network of DPBA and the shaping weight function of BiPaRS-EM are $5 \times 10^{-4}$. The BiPaRS-MGL and BiPaRS-IMGL algorithms use a learning rate of $10^{-3}$ to update their shaping weight functions. The RCPO algorithm uses a dynamic learning rate to update its Lagrange multiplier, which starts from $5 \times 10^{-5}$ and exponentially decays with a factor $1 - 10^{-9}$.

One thing should be noted is that the \textit{Humanoid-v2} task has a much higher state dimension than the other four tasks, which means that the neural networks of the tested algorithms have much more parameters. Therefore, for training the BiPaRS-IMGL algorithm in \textit{Humanoid}, directly approximating the meta-gradient $\nabla_{\phi} {\theta}^{\prime}$ according to Equation (\ref{Eq:Grad_J_Phi_IMGL}) is extremely difficult because the Hessian matrix $\nabla_{\theta} g_{\theta}(s_i, a_i)^{\top}$ with respect to the policy parameter $\theta$ has $\mathcal{O}(n^3)$ computational complexity. To address this issue, we make the policy network a two-layer network with only $32$ units in each layer and ignore the term $\alpha \sum_{i=1}^{N} \tilde{Q}(s_i, a_i) \nabla_{\theta} g_{\theta}(s_i, a_i)^{\top}$ in Equation (\ref{Eq:Grad_J_Phi_IMGL}). We summarize all the hyperparameter settings in the \textit{MuJoCo} experiment in Table \ref{Tab:HyperParamMuJoCo}.

\newpage

Assume that there are $M$ monks and the total amount of the rice gruel is $L$. Without loss of generality, we assume that the rice gruel is assigned to the monks according to the order $1, 2, ..., M$. The decision-making at each step is two-fold. Firstly, the system chooses one from the $M$ monks. Secondly, the chosen monk is the assigner, who should determine the amount of rice gruel to the current monk to be assigned.

Formally, this can be modeled as follows. At each step $t$ ($t = 1, 2, ..., M$), the system first chooses a monk $m_t \in \{ 1, ..., M \}$. Then the chosen monk has to choose an action $l_t \in [0, L - \sum_{t'=1}^{t-1} l_{t-1}]$, which stands for the amount of rice gruel assigned to the monk at the assign order $t$. Therefore, the problem has only $M+1$ states and the state transitions are deterministic: 
\begin{equation}
S_1 \rightarrow S_2 \rightarrow \cdot\cdot\cdot \rightarrow S_M \rightarrow S_T,
\end{equation}
where $S_T$ is the terminal state.

There are $M+1$ agents in the system, namely the $M$ monks and the system. Importantly, we assume that each monk tries to maximize its own utility, while the sysytem tries to make the utility of each monk the same. For any step $t$, let $s_t$ denote the state and $a_t = (m_t, l_t)$ denote the joint action of the system and the chosen monk. Then for any monk $i \in \{ 1, ..., M\}$, the reward function is 
\begin{equation}
R_i(s_t, a_t) = R_i(s_t, m_t, l_t) = \left\{
\begin{aligned}
&l_t &\text{if } i = t, \\
&0 &\text{o.w.},
\end{aligned}
\right.
\end{equation}
which means that only the monk at the assign order $t$ gets the rice gruel assigned by monk $m_t$. The system reward function is
\begin{equation}
R_0(s_t, a_t) = R_0(s_t, m_t, l_t) = - | \frac{L}{M} - l_t |,
\end{equation} 
which stands for the absolute fairness of the system.

\begin{table}
  \centering
  \begin{small}
  \begin{tabular}{c*{5}{c}}
    \toprule
    { } & \textit{Swimmer-v2} & \textit{Hopper-v2} & \textit{Humanoid-v2} & \textit{Walker2d-v2} & \textit{HalfCheetah-v2} \\ \hline
    $w$ & $20$ & $16$ & $20$ & $10$ & $100$ \\
    \bottomrule
  \end{tabular}
  \end{small}
  \caption{The task-specific weight $w$ in each of five \textit{MuJoCo} tasks \label{Tab:WeightMuJoCo}}
\end{table}

\begin{table}
  \centering
  \begin{small}
  \begin{tabular}{c*{1}{l}}
    \toprule
    {Algorithm} & Hyperparameters \\ \hline
    Base Learner (PPO) & \tabincell{l}{policy network: three $64$-unit FC hidden layers, \\ { }{ }{ }{ }{ }{ }{ }{ }{ }{ }{ }{ }{ }{ }{ }{ }{ }{ }{ }{ }{ }{ }{ }{ }{ } \textit{relu} activation \\ value network: three $64$-unit FC hidden layers, \\ { }{ }{ }{ }{ }{ }{ }{ }{ }{ }{ }{ }{ }{ }{ }{ }{ }{ }{ }{ }{ }{ }{ }{ }{ }  \textit{relu} activation \\ threshold of probability ratio clipping ($\epsilon$): $0.2$ \\ update period: $20,000$ steps \\ number of epoches per update: $50$ \\ batch size: $1024$ \\ GAE parameter ($\lambda$): $0.95$ \\ optimizer: Adam \\ learning rate of policy network: $10^{-4}$ \\ learning rate of value network: $2 \times 10^{-4}$ \\ policy gradient clip norm: 1.0 \\ value fuction gradient clip norm: 1.0 \\ discount rate ($\gamma$): $0.999$ } \\ \hline
    DPBA & \tabincell{l}{potential network: two $64$-unit FC layers, \\ { }{ }{ }{ }{ }{ }{ }{ }{ }{ }{ }{ }{ }{ }{ }{ }{ }{ }{ }{ }{ }{ }{ }{ }{ }{ }{ }{ }{ } \textit{tanh} activation \\ optimizer: Adam \\ learning rate of potential network: $5 \times 10^{-4}$ \\ potential network gradient clip norm: $10.0$} \\ \hline
    BiPaRS & \tabincell{l}{shaping weight network: $16$-unit FC layer + $8$-unit FC layer, \\ { }{ }{ }{ }{ }{ }{ }{ }{ }{ }{ }{ }{ }{ }{ }{ }{ }{ }{ }{ }{ }{ }{ }{ }{ }{ }{ }{ }{ }{ }{ }{ }{ }{ }{ }{ }{ }{ }{ }{ } \textit{tanh} activation \\ optimizer: Adam \\ learning rate of shaping weight network: $5 \times 10^{-4}$ for BiPaRS-EM, \\ and $5 \times 10^{-4}$ for BiPaRS-MGL and BiPaRS-IMGL \\ shaping weight network gradient clip norm: $10.0$ \\ initial shaping weight: $1.0$ \\ shaping weight range: $[-1, 1]$} \\ \hline
    RCPO & \tabincell{l}{Lagrange multiplier lower bound: $0$ \\ Lagrange multiplier upper bound: $10000$ \\ initial learning rate of Lagrange multiplier: $5 \times 10^{-5}$ \\ decay factor of learning rate: $1 - 10^{-9}$ \\ } \\ \hline
    Other & \tabincell{l}{policy network of BiPaRS-IMGL in \textit{Humanoid-v2}: \\ two $32$-unit FC hidden layers, \textit{relu} activation } \\
    \bottomrule
  \end{tabular}
  \end{small}
  \caption{The hyperparameters of the tested algorithms in the \textit{MuJoCo} experiment \label{Tab:HyperParamMuJoCo}}
\end{table}

\newpage

\end{document}